\documentclass{article}

\usepackage{graphicx}
\usepackage{booktabs}
\usepackage[english]{babel}
\usepackage{amsfonts, amsmath, amssymb, amsthm}
\usepackage{xcolor}
\usepackage[affil-it]{authblk}
\usepackage{algorithm}
\usepackage{algpseudocode}
\usepackage{bbm}
\usepackage{multicol}
\usepackage{hyperref}

\newtheorem{thm}{Theorem}[section]

\newtheorem{prop}{Proposition}[section]
\newtheorem*{prop*}{Proposition}

\DeclareMathOperator*{\argmax}{arg\,max}
\DeclareMathOperator*{\nmin}{n-\,min}
\DeclareMathOperator*{\onemin}{1-\,min}
\DeclareMathOperator*{\twomin}{2-\,min}
\DeclareMathOperator*{\threemin}{3-\,min}

\title{Optimizing Sensor Network Design for Multiple Coverage} 

\author[1]{Lukas Taus}
\author[2]{Yen-Hsi R. Tsai}
\affil[ ]{Oden Institute for Computational Engineering and Sciences, University of Texas at Austin}
\affil[1]{{l.taus@utexas.edu}}
\affil[2]{{ytsai@math.utexas.edu}}

\date{}

\begin{document}

\maketitle

\begin{abstract}
Sensor placement optimization methods have been studied extensively. They can be applied to a wide range of applications, including surveillance of known environments, optimal locations for 5G towers, and placement of missile defense systems. However, few works explore the robustness and efficiency of the resulting sensor network concerning sensor failure or adversarial attacks. This paper addresses this issue by optimizing for the least number of sensors to achieve multiple coverage of non-simply connected domains by a prescribed number of sensors. We introduce a new objective function for the greedy (next-best-view) algorithm to design efficient and robust sensor networks and derive theoretical
bounds on the network's optimality. 
We further introduce a Deep Learning model to accelerate the algorithm for near real-time computations. The Deep Learning model requires the generation of training examples. Correspondingly, we show that understanding the geometric properties of the training data set provides important insights into the performance and training process of deep learning techniques. Finally, we demonstrate that a simple parallel version of the greedy approach using a simpler objective can be highly competitive.
\end{abstract}

\section{Introduction}
Sensor placement optimization methods are a well-studied field with far-reaching implications for many application areas, including surveillance of known environments using minimal sensors, one of the most actively studied problems in autonomous robotics. Another application that got significant traction recently was the optimal placement of 5G-wireless towers. 5G communication uses 30GHz-300GHz signals, corresponding to wavelengths ranging from 1mm to 10mm. Compared to the size of the obstacles in a larger domain, this is a high-frequency wave propagation regime; thus, the waves can be well approximated by straight lines. In \cite{5GSurvey}, it was also stated that one of the major difficulties in 5G communication is the signal sensitivity due to blockage. 
% Since 5G technology utilizes simultaneous connection to multiple towers, we can calculate optimal positions for 5G towers that allow for a high bandwidth while minimizing signal blockage.

However, algorithms for this type of sensor coverage problem typically only consider single coverage. A part of the environment is visible if at least a single sensor observes it. This formulation makes the network of sensors volatile to technical failure and adversarial attacks. 

In this paper, we consider the problem of designing a network with the least number of sensors achieving multiple coverage for a bounded, non-simply connected environment. 
The multiple coverage condition specifies that a region is not considered fully visible until at least $k$ sensors observe it. This also provides measurements from different angles, which improves the accuracy for object reconstruction tasks discussed in \cite{jin2003estimation}.
In some sense, one has two competing objectives: on the one handle, one wishes that each point in the free space be covered by a prescribed minimal number of sensors; on the other hand, one wishes to use as few sensors as possible. %Furthermore, one hopes that sensors are spread apart for robustness.

However, the computational complexity of this problem makes computations of optimal solutions challenging. 
Greedy algorithms may efficiently find near-optimal solutions to many optimization problems. 
However, designing a suitable gain/reward function is essential.
Among the main contributions of this paper are (i) the formulation of a novel gain function with a corresponding estimate of the optimality when it's used in the greedy algorithm, (ii) a Deep Learning strategy to accelerate the computation, and (iii) an empirically highly efficient  parallel greedy algorithm for the multiple coverage problem. 
Some representative numerical examples and comparison of the algorithms' performance statistics are presented.

% applied to the multiple coverage problem using the commonly used gain function; it falls short. This paper demonstrates that the conventional gain function consistently yields non-robust solutions and introduces a novel gain function tailored to solve the multiple coverage problem effectively. 

\subsection{Related work}
The problem of optimally placing sensors to achieve visibility in a given environment is closely related to the gallery problem in computational geometry. An upper bound for sensors needed to achieve complete visibility in simple polygonal environments with holes has been derived in \cite{BjorlingSachs1995AnEA,185346}. For general environments, however, the problem is NP-complete in \cite{URRUTIA2000973,1056648,1057165}. Multiple methods for solving this problem have been proposed, including alternating minimization \cite{GoroMinProb} and transformation of the problem into a system of differential equations \cite{kim2016optimal}. These approaches, however, assume that an a priori fixed amount of sensors are placed. In most applications, however, this is not known. For general two-dimensional environments, it was proposed to place sensors along frontiers \cite{landa2006visibility,landa2008visibility,landa2007robotic}. However, in \cite{StarMapExample}, it was shown that this approach is not necessarily optimal. For general three-dimensional environments, the use of level set representations for the computation of visibility was proposed in \cite{cheng2005visibility,tsai2004visibility}. Another approach to handling three-dimensional environments described by a function graph was discussed in \cite{visalg}. Using these representations, ``next-best-view'' algorithms, which maximize visibility information gain, are deployed to generate optimal sequences of sensor locations \cite{ly2018greedy}. Commonly used measures for visibility information gain include the volume of unexplored regions \cite{greedy1,greedy2,greedy3,greedy4} and surface area of frontiers weighted by viewing angle \cite{valente2012information,valente2014information}. The use of deep learning techniques to evaluate these measures more efficiently has been discussed in \cite{explorationsurveilance}. In addition to achieving optimal visibility, the uncertainty quantification of inaccurate depth measurements of LIDAR sensors was discussed in \cite{uncertaintypaper}. To improve the accuracy of these sensors, it was proposed to optimize sensor locations such that multiple sensors observe the same regions in the environment \cite{taus2023efficient}. In the described approach, however, no theoretical bound for the efficiency of the algorithm has been proven. In this paper, we propose using a novel utility function, which we call ``gain'', to achieve multiple coverage for which we prove theoretical guarantees.

\section{The problem definition}
A given environment consists of areas occupied by an obstacle and free space. We denote this as
$$\Omega = \Omega_\text{obs} \cup \Omega_{\text{free}}.$$
We will consider the environment in $\mathbb{R}^2$ and $\mathbb{R}^3$.

\begin{figure}[H]
    \centering
    \includegraphics[width = 0.49\linewidth]{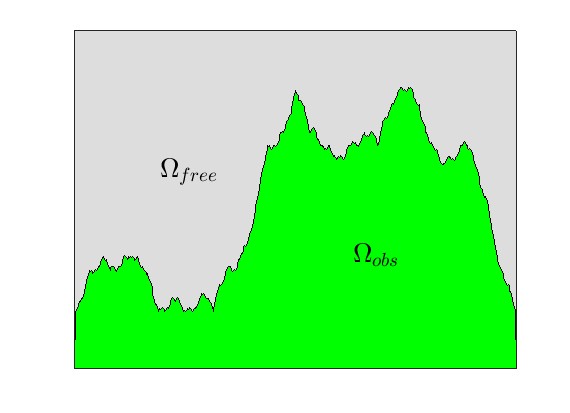}
    \includegraphics[width = 0.49\linewidth]{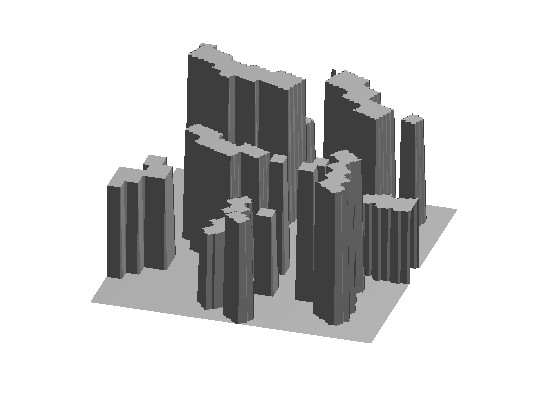}
    \caption{Representations of environments in $\mathbb{R}^2$ and $\mathbb{R}^3.$}
    \label{fig:obstacle}
\end{figure}

Figure~\ref{fig:obstacle} shows representations of different types of environments. The left figure shows an environment in $\mathbb{R}^2$ where the green area is the part of the environment occluded by an obstacle. Similarly, the right figure shows the surface of the obstacles of an environment in 3 dimensions. \\

We define the visibility relation using the line-of-sight principle which means that a point $\mathbf{x} \in \Omega_\text{free}$ is visible to another point $\mathbf{y} \in \Omega_\text{free}$ if 
$$\mathbf{x} \overset{\text{vis}}{\sim} \mathbf{y} \iff  \forall t \in [0,1] \text{: } t\mathbf{x} + (1-t)\mathbf{y} \in \Omega_\text{free}.$$
This equivalence relation is then used to define the visibility function
\begin{equation}
    \phi_{\mathbf{x}}(\mathbf{y}) = \begin{cases}
    1, & \text{if } \mathbf{x} \overset{\text{vis}}{\sim} \mathbf{y},\\
    0, & \text{else.}
\end{cases}
\end{equation}
$\phi_{\mathbf{x}}$ is used to describe the observed area when placing a sensor $\mathbf{x} \in \Omega_\text{free}$. 
%A fast algorithm for computing this information on Cartesian grids is proposed in \cite{tsai2004visibility}.

\begin{figure} %[H]
    \centering
    \includegraphics[width = 0.5\linewidth]{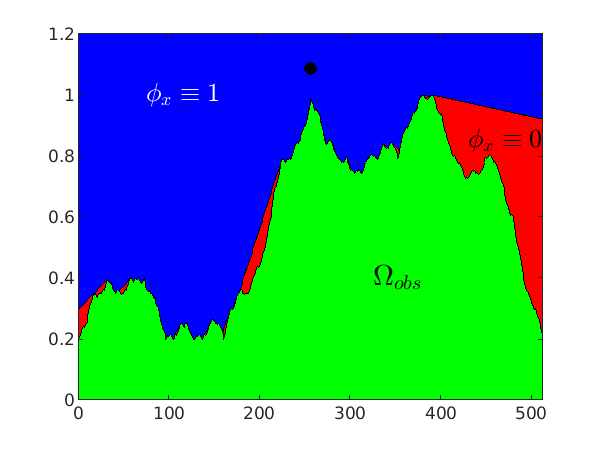}
    \caption{Illustration of $\phi_{\mathbf{x}}(\mathbf{y})$ in a 2 dimensional environment.}
    \label{fig:phi_x_2D}
\end{figure}
% \begin{figure}[H]
%     \centering
%     \includegraphics[width = 0.8\linewidth]{images/X_3DPhi.jpg}
%     \caption{Illustration of $\phi_{\mathbf{x}}(y)$ in a 2 dimensional environment.}
%     \label{fig:phi_x_3D}
% \end{figure}
We illustrate this concept in Figure~\ref{fig:phi_x_2D}, in which
%and figure~\ref{fig:phi_x_3D}. 
%In Figure~\ref{fig:phi_x_2D} 
a sensor $x$ is placed at the location of the black dot. This results in the blue area becoming visible and therefore $\phi_{\mathbf{x}} \equiv 1$ in this region. The red area, however, is not visible to sensor $x$ and therefore $\phi_{\mathbf{x}} \equiv 0$. Note also that $\Omega_{\text{obs}}$ can never be observed by any sensor and therefore also $\phi_{\mathbf{x}} \equiv 0$ in $\Omega_{\text{obs}}$. 

% Figure~\ref{fig:phi_x_3D} shows the green surface separating visibility. The region below the surface is not visible to the sensor shown as a black dot, and we therefore have $\phi_{\mathbf{x}} \equiv 0$. On the other hand, the region above the shown surface is observed by the sensor, and it follows that $\phi_{\mathbf{x}} \equiv 1$.\\

To describe the multiple coverage constraint, however, we must also track areas observed by multiple sensors. We use the order-of-visibility function, which is defined as 
\begin{equation}
    \mathcal{O}_\text{vis}(\mathbf{y}; P) = \sum_{x \in P} \phi_{\mathbf{x}}(\mathbf{y}),
    %\mathcal{O}_\text{vis}(\mathbf{y}; P) = \min \left\{ \sum_{x \in P} \phi_{\mathbf{x}}(\mathbf{y}), k \right\},
\end{equation}
where $P$ is a set of sensor locations. This function returns the number of sensors in the set $P$ which observe point $\mathbf{y} \in \Omega_\text{free}$ up to a fixed amount $k$. 

\begin{figure}%[H]
    \centering
    \includegraphics[width = 0.5\linewidth]{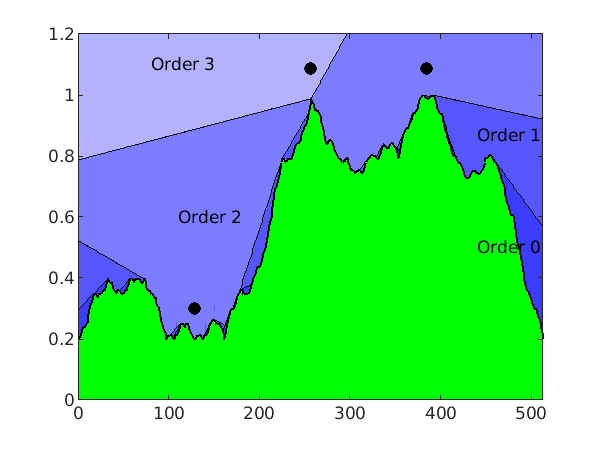}
    \caption{Example of $\mathcal{O}_\text{vis}$.}
    \label{fig:OvisK}
\end{figure}

Figure~\ref{fig:OvisK} shows the order of visibility in an environment where three sensors have been placed. The sensors are again marked using black dots. The regions of visibility overlap now, producing different orders of visibility. The lighter the color blue in Figure~\ref{fig:OvisK} the higher order of visibility we observe. However, there are still areas that are not observed by any sensor. We denote this area and $\Omega_\text{obs}$ as visibility of order 0. With this, we can then precisely describe the multiple coverage problem.\\

\fbox{\begin{minipage}{0.9\linewidth}
\textbf{Multiple coverage optimization:}

Find the minimal set of sensors $P = \{x_1,...,x_n\}$ such that
\begin{equation}\label{eqn:problem_statement}
  \text{Vol} \left( \left\{y\in \Omega \vert~ \mathcal{O}_\text{vis}(y; P) \geq k \right\} \right) \le (1 - \delta) \text{Vol}(\Omega_{\text{free}}),  
\end{equation}
for some given threshold $\delta \in (0,1).$
\end{minipage}}\\

While not explicitly stated, we prefer solutions where the sensors are spread out and do not occupy the same locations. This makes the sensor network more robust against adversarial attacks in practice.

%\comment{The above problem definition does not prescribe the objective that sensors must be spread apart.}
% \comment{We may want to think of some arguments saying that having multiple sensors on top of each other is sub-optimal for many situations.}

\section{Methods}\label{sec:methods}

The optimization problems, such as the one defined in 
\eqref{eqn:problem_statement} has a daunting complexity. It was shown to be NP-complete \cite{URRUTIA2000973,1056648,1057165}. Assuming we want to place $K$ sensors on an $M^3$ grid over the environment $\Omega \subseteq \mathbb{R}^3$ by a brute-force search, we would have to check $M^{3K}$ sets of sensors. It is therefore necessary to relax the problem into a more tractable one. 

In the following exposition, we shall describe the proposed mathematical formulations on the continuum, since the notation is cleaner. 
In practice, we work with Cartesian grids. This means that sensors will be placed on the grid nodes, and the visibility information computed on the grid, using algorithms introduced in \cite{tsai2004visibility, visalg}, and integrals are approximated by simple Riemann sums.

\subsection{The greedy algorithm}
A popular approach is the greedy algorithm. Instead of looking for the optimal set of sensors, we intend to generate a sequence of sensor locations. A new location in the sequence is chosen via maximization of a predetermined gain function $\mathcal{G}_k$, depending on the existing (given) list of sensors:
$$\mathbf{x}_{m+1} \in \argmax_{\mathbf{x} \in \Omega_{\text{free}}} \mathcal{G}_k(\mathbf{x}; P),~~~~P=\{\mathbf{x}_1, \dots,\mathbf{x}_m\}.$$
In the next greedy step, $\mathbf{x}_{m+1}$ is added to the list $P$.
This approach leads to the following algorithm.
\begin{algorithm}[H]
\caption{The greedy algorithm}\label{alg:eps-greedy}
\begin{algorithmic}
\State $P = \text{list}()$
\While{$\int_{\Omega_{\text{free}}} \mathcal{O}_\text{vis}(\mathbf{y}; P) d\mathbf{y} < \delta k \lambda(\Omega_{\text{free}})$}
\State calculate $\mathcal{G}_k(\mathbf{x}, P)$ for every $\mathbf{x} \in \Omega_{\text{free}}$
\State choose $\mathbf{x}^* \in \{\mathbf{y} \in \Omega_{\text{free}}~\vert~ \mathcal{G}_k(\mathbf{y}, P) \geq (1-\epsilon)\max_{\mathbf{z} \in \Omega_{\text{free}}} \mathcal{G}_k(\mathbf{z}, P)\}$ randomly
\State P.append($\mathbf{x}^*$)
\EndWhile
\end{algorithmic}
\end{algorithm}

In each greedy step, 
we optimize a weighted sum of the volume having different minimal orders of visibility:
\begin{equation}\label{eq:f_k}
    f_k(P) = \sum_{i=1}^k w_i \int_{\Omega_\text{free}} \mathbbm{1}_{\{\mathcal{O}_{\text{vis}}(\mathbf{z}, P) \geq i\}} d\mathbf{z}.
\end{equation}
Each integral evaluates the volume of the sub-level sets of $\mathcal{O}_{\text{vis}}(\mathbf{z},P)$. We shall refer to $f_k(P)$ as the coverage of the set of sensors listed in $P$.

Under this optimization objective, the gain function $\mathcal{G}_k(\mathbf{x},P)$ describes in the increase in the coverage:
\begin{equation}\label{eq:gain_in_fk}
    f_k(P\cup\{\mathbf{x}\})=f_k(P)+\mathcal{G}_k(\mathbf{x},P).
\end{equation}
% We propose the gain function defined as follows:
% \begin{equation}\label{eqn:novel_gain}
%     \mathcal{G}_k(\mathbf{x}; P) = \sum_{i=1}^k w_i V_i(\mathbf{x},P),
% \end{equation}
% where
% \begin{equation}\label{eqn:Vi}
%     V_i(\mathbf{x},P) = \int_{\Omega_\text{free}} \left( \mathbbm{1}_{\{\mathcal{O}_{\text{vis}}(\mathbf{z}, P \cup \{ \mathbf{x} \}) \geq i\}}  - \mathbbm{1}_{\{\mathcal{O}_{\text{vis}}(\mathbf{z}, P) \geq i \}} \right) d\mathbf{z}
% \end{equation}
% and $w_i\ge 0,$ for $i=1,2,\cdots, k$. The function $V_i(\mathbf{x},P)$ is the gain in order-$i$ visibility, if a new sensor is placed at $\mathbf{x}$.
% $\mathcal{G}_k(\mathbf{x},P)$ is a weighted sum of the gains for each visibility order.
In the next Subsection, we present a theory relating the
efficiency of the greedy algorithm using \eqref{eq:gain_in_fk}.

\subsection{A submodularity theory}
In this Section, we will analyze our choice of gain function and provide proof of its efficiency. 

% For the analysis, we measure the quality of a set of sensor locations, $P$, using the weighted sum of the areas under different orders of visibility:
% \begin{equation}\label{eq:f_k}
%     f_k(P) = \sum_{i=1}^k w_i \int_{\Omega_\text{free}} \mathbbm{1}_{\{\mathcal{O}_{\text{vis}}(\mathbf{z}, P) \geq i\}} d\mathbf{z}.
% \end{equation}
% This allows us to assign weights $\omega_j = \sum_{i=1}^j w_i$ to the volumes of regions with order of visibility $j$. \comment{$\omega_j$ is not used anywhere in this subsection. }  

% Denote the discrete derivative of $f_k$ is
% $$\Delta f_k (\mathbf{x}, P) = f_k(P \cup \{\mathbf{x}\}) - f_k(P) = \mathcal{G}_k(\mathbf{x}; P).$$

\begin{prop}\label{prop:monotone} (Monotonicity of $f_k$)
     If $w_i \geq 0$ for all $1\le i\le k$ in \eqref{eq:f_k}, then 
     $$f_k(A) \leq f_k(B),$$
      for all countable sets $A \subseteq B \subseteq \Omega_\text{free}.$ 
\end{prop}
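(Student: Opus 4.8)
The plan is to reduce the monotonicity of $f_k$ to the monotonicity of each individual term in the sum \eqref{eq:f_k}. Since $f_k(P) = \sum_{i=1}^k w_i \int_{\Omega_\text{free}} \mathbbm{1}_{\{\mathcal{O}_{\text{vis}}(\mathbf{z}, P) \geq i\}} d\mathbf{z}$ is a nonnegative linear combination (using $w_i \geq 0$) of the functions $P \mapsto \int_{\Omega_\text{free}} \mathbbm{1}_{\{\mathcal{O}_{\text{vis}}(\mathbf{z}, P) \geq i\}} d\mathbf{z}$, it suffices to show each of these is monotone nondecreasing in $P$ with respect to set inclusion. A nonnegative combination of monotone functions is monotone, so the result follows immediately once the individual claim is established.

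For the individual term, the key observation is the pointwise monotonicity of $\mathcal{O}_{\text{vis}}$: for $A \subseteq B$ and any fixed $\mathbf{z} \in \Omega_\text{free}$,
$$\mathcal{O}_{\text{vis}}(\mathbf{z}; A) = \sum_{\mathbf{x} \in A} \phi_{\mathbf{x}}(\mathbf{z}) \leq \sum_{\mathbf{x} \in B} \phi_{\mathbf{x}}(\mathbf{z}) = \mathcal{O}_{\text{vis}}(\mathbf{z}; B),$$
because $\phi_{\mathbf{x}}(\mathbf{z}) \geq 0$ for every $\mathbf{x}$ and we are summing over a larger index set. (Here I would note that $A, B$ being countable ensures these sums are well-defined, possibly capping at $k$ as the paper's definition does; the cap only makes the inequality easier.) Consequently, for each threshold $i$, the sublevel-set indicator satisfies $\mathbbm{1}_{\{\mathcal{O}_{\text{vis}}(\mathbf{z}, A) \geq i\}} \leq \mathbbm{1}_{\{\mathcal{O}_{\text{vis}}(\mathbf{z}, B) \geq i\}}$ pointwise in $\mathbf{z}$: if the left side is $1$ then $\mathcal{O}_{\text{vis}}(\mathbf{z}, A) \geq i$, hence $\mathcal{O}_{\text{vis}}(\mathbf{z}, B) \geq i$, so the right side is $1$ as well.

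Finally I would integrate this pointwise inequality over $\Omega_\text{free}$, using monotonicity of the integral, to conclude $\int_{\Omega_\text{free}} \mathbbm{1}_{\{\mathcal{O}_{\text{vis}}(\mathbf{z}, A) \geq i\}} d\mathbf{z} \leq \int_{\Omega_\text{free}} \mathbbm{1}_{\{\mathcal{O}_{\text{vis}}(\mathbf{z}, B) \geq i\}} d\mathbf{z}$, then multiply by $w_i \geq 0$ and sum over $i$ from $1$ to $k$ to get $f_k(A) \leq f_k(B)$. There is essentially no hard step here — this is a routine argument — but the only point requiring a little care is measurability: one should confirm that the sublevel sets $\{\mathbf{z} : \mathcal{O}_{\text{vis}}(\mathbf{z}, P) \geq i\}$ are measurable so that the integrals make sense. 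Since $P$ is countable and each $\phi_{\mathbf{x}}$ is an indicator of a measurable set (the visibility cell of $\mathbf{x}$), $\mathcal{O}_{\text{vis}}(\cdot; P)$ is a countable sum of measurable functions and hence measurable, so its sublevel sets are measurable and the argument goes through.
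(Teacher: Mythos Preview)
Your proposal is correct and follows essentially the same approach as the paper: pointwise monotonicity of $\mathcal{O}_{\text{vis}}$ in the sensor set, then the indicator inequality, then monotonicity of the integral combined with $w_i\ge 0$. The only cosmetic difference is that the paper adds one sensor at a time and invokes induction to pass from $A$ to $B$, whereas you compare the two sums directly; your version is slightly more direct and also makes the measurability point explicit, which the paper leaves implicit.
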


% \begin{proof}
%     Suppose $A \subseteq \Omega_\text{free}$ and $x,z \in \Omega_\text{free}$. Then
%     $$\mathcal{O}^k_\text{vis}(\mathbf{z}, A \cup \{ x \}) = \min \left\{\sum_{y \in A \cup \{x\}} \phi_y(\mathbf{z}), k \right\} \geq \min \left\{ \sum_{y \in A} \phi_y(\mathbf{z}), k \right\} = \mathcal{O}^k_\text{vis}(\mathbf{z}, A).$$
%     Thus for any $i \in \{1,...,k\}$ $\mathcal{O}^k_\text{vis}(\mathbf{z}, A) \geq i$ implies $\mathcal{O}^k_\text{vis}(\mathbf{z}, A \cup \{ x \}) \geq i$ and therefore
%     $$ \mathbbm{1}_{\{\mathcal{O}^k_{\text{vis}}(\mathbf{z}, A \cup \{ x \}) \geq i\}} \geq \mathbbm{1}_{\{\mathcal{O}^k_{\text{vis}}(\mathbf{z}, A) \geq i\}}.$$
%     By monotonicity of integrals and the fact that all $w_i \geq 0$, this shows that
%     $$f_k(A \cup \{ x \}) \geq f_k(A)$$
%     and by induction, therefore also
%     $$f_k(A) \leq f_k(B)$$
%     for any countable set $B \supseteq A$.
% \end{proof}

\begin{prop}(Submodularity of $f_k$)\label{prop:submodular}
    If $w_{i+1} \leq w_i$ for all $1\le i \le k-1 $,  then
    $$\mathcal{G}_k(\mathbf{z}, A) \geq \mathcal{G}_k(\mathbf{z}, B),$$
    for any countable sets $A \subseteq B \subseteq \Omega_\text{free}$ and any $\mathbf{z} \in \Omega_\text{free}$.
\end{prop}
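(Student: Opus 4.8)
The plan is to move every term under a single integral sign and thereby reduce the set-function inequality to a one-dimensional fact about a scalar ``reward-per-point'' function. First I would introduce $g(n) = \sum_{i=1}^{k} w_i\,\mathbbm{1}_{\{n \ge i\}}$ for a nonnegative integer $n$, so that $f_k(P) = \int_{\Omega_\text{free}} g\big(\mathcal{O}_\text{vis}(\mathbf{y};P)\big)\,d\mathbf{y}$ (countability of $P$ is used only so that $\mathbf{y}\mapsto\mathcal{O}_\text{vis}(\mathbf{y};P)$ is measurable, as a countable sum of indicators). The key observation is that inserting one new sensor $\mathbf{z}$ raises $\mathcal{O}_\text{vis}(\mathbf{y};\cdot)$ by exactly $\phi_{\mathbf{z}}(\mathbf{y})\in\{0,1\}$, and does so on the set $\{\mathbf{y}: \mathbf{y}\overset{\text{vis}}{\sim}\mathbf{z}\}$, which depends on $\mathbf{z}$ alone and not on the sensors already present. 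Consequently, for $\mathbf{z}\notin P$,
$$\mathcal{G}_k(\mathbf{z},P) \;=\; \int_{\{\mathbf{y}\,:\,\mathbf{y}\,\overset{\text{vis}}{\sim}\,\mathbf{z}\}} \Big( g\big(\mathcal{O}_\text{vis}(\mathbf{y};P)+1\big) - g\big(\mathcal{O}_\text{vis}(\mathbf{y};P)\big) \Big)\,d\mathbf{y},$$
so that $\mathcal{G}_k(\mathbf{z},A)$ and $\mathcal{G}_k(\mathbf{z},B)$ are integrals over the \emph{same} region. It therefore suffices to prove, pointwise in $\mathbf{y}$, that $g\big(\mathcal{O}_\text{vis}(\mathbf{y};A)+1\big) - g\big(\mathcal{O}_\text{vis}(\mathbf{y};A)\big) \ge g\big(\mathcal{O}_\text{vis}(\mathbf{y};B)+1\big) - g\big(\mathcal{O}_\text{vis}(\mathbf{y};B)\big)$ and then integrate.

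Next I would reduce this pointwise inequality to monotonicity of the forward differences of $g$. Since $A\subseteq B$, for every $\mathbf{y}$ we have $\mathcal{O}_\text{vis}(\mathbf{y};A) = \sum_{x\in A}\phi_{\mathbf{x}}(\mathbf{y}) \le \sum_{x\in B}\phi_{\mathbf{x}}(\mathbf{y}) = \mathcal{O}_\text{vis}(\mathbf{y};B)$, so it is enough that $n\mapsto g(n+1)-g(n)$ be non-increasing on $\{0,1,2,\dots\}$. A direct computation gives $g(n+1)-g(n) = w_{n+1}$ for $0\le n\le k-1$ and $g(n+1)-g(n)=0$ for $n\ge k$; hence the forward differences of $g$ form the sequence $w_1, w_2, \dots, w_k, 0, 0, \dots$, which is non-increasing precisely because $w_{i+1}\le w_i$ for $1\le i\le k-1$. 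Combining the two reductions yields $\mathcal{G}_k(\mathbf{z},A)\ge\mathcal{G}_k(\mathbf{z},B)$.

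I do not expect a deep obstacle: the content is in the setup rather than in any estimate, and the main thing to get right is the bookkeeping that a single added sensor increases $\mathcal{O}_\text{vis}$ by at most one and only on the fixed, $P$-independent visibility footprint of $\mathbf{z}$ — this is what allows an integrand-by-integrand comparison over a common domain. The one spot that needs a little care is the transition at the cap $i=k$, where the forward difference of $g$ drops from $w_k$ to $0$; to be sure the difference sequence is genuinely non-increasing one also wants $w_k\ge 0$, a condition that costs nothing since it is implied by (and natural alongside) the nonnegativity of the weights already assumed for the monotonicity of $f_k$ in Proposition~\ref{prop:monotone}. With $w_i\ge 0$ in force, the degenerate case $\mathbf{z}\in B$ is immediate as well: then $\mathcal{G}_k(\mathbf{z},B)=0$ while $\mathcal{G}_k(\mathbf{z},A)\ge 0$ by the same computation, so the inequality holds trivially.
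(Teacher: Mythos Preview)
Your proof is correct and follows essentially the same route as the paper: both split the domain according to the visibility footprint $\{\phi_{\mathbf z}=1\}$ of the new sensor, observe that on this footprint $\mathcal O_{\text{vis}}$ jumps by exactly one, and then invoke the ordering $w_1\ge w_2\ge\cdots\ge w_k\ge 0$ together with $\mathcal O_{\text{vis}}(\mathbf y;A)\le \mathcal O_{\text{vis}}(\mathbf y;B)$. Your packaging via the scalar function $g(n)=\sum_{i=1}^k w_i\mathbbm 1_{\{n\ge i\}}$ and its non-increasing forward differences is a cleaner way to say what the paper obtains by an Abel-type resummation that pulls out the coefficients $(w_{i+1}-w_i)$; the content is the same, and you are right to flag that the drop from $w_k$ to $0$ at the cap quietly uses $w_k\ge 0$, exactly as the paper does.
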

Note that the condition $w_{i+1} \leq w_i$ incentivizes the algorithm to produce spaced out sensors. Indeed, the weight $w_i$ can be interpreted as the gain when promoting a region of volume $1$ from order $i-1$ visibility to order $i$ visibility. Thus, the algorithm prefers lower-order visibility, which produces spaced-out sensor networks.
\begin{thm}
    Let $P_n = \{x_1,...,x_n\}$ be the set of $n$ sensors placed according to Algorithm~\ref{alg:eps-greedy} using $\mathcal{G}_k$ defined in \eqref{eq:f_k}-\eqref{eq:gain_in_fk} and parameter $\epsilon \in [0,1)$. 
    $$P_l^* = \{x_1^*,...,x_l^*\}\in\argmax_{P \subseteq \Omega_\text{free}, \vert P \vert = l} f_k(P)
    \implies f_k(P_n) \geq \left( 1 -  e^{-(1-\epsilon)\frac{n}{l}} \right) f_K(P_l^*).$$
\end{thm}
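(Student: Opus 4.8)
This is the classical Nemhauser–Wolsey–Fisher bound for greedy maximization of a monotone submodular function, adapted to the $\epsilon$-approximate greedy rule in Algorithm~\ref{alg:eps-greedy}. First I would record the two structural facts proved just above: $f_k$ is monotone (Proposition~\ref{prop:monotone}) and submodular in the sense of diminishing gains (Proposition~\ref{prop:submodular}), and I would note $f_k(\emptyset)=0$, which holds because each indicator integrand vanishes when $P=\emptyset$. Write $P_m = \{x_1,\dots,x_m\}$ for the greedy iterates and let $\mathrm{OPT} = f_k(P_l^*)$.

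The heart of the argument is the per-step progress inequality. Fix a greedy step $m \to m+1$. By monotonicity and the telescoping/subadditivity of marginal gains (submodularity),
\[
  \mathrm{OPT} \;\le\; f_k(P_m \cup P_l^*) \;=\; f_k(P_m) + \sum_{j=1}^{l}\Big(f_k(P_m\cup\{x_1^*,\dots,x_j^*\}) - f_k(P_m\cup\{x_1^*,\dots,x_{j-1}^*\})\Big) \;\le\; f_k(P_m) + \sum_{j=1}^{l} \mathcal{G}_k(x_j^*, P_m),
\]
where the last step uses Proposition~\ref{prop:submodular} to replace each marginal gain relative to the larger set $P_m\cup\{x_1^*,\dots,x_{j-1}^*\}$ by the gain relative to $P_m$. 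Since the greedy rule with parameter $\epsilon$ picks $x_{m+1}$ with $\mathcal{G}_k(x_{m+1},P_m) \ge (1-\epsilon)\max_{z}\mathcal{G}_k(z,P_m) \ge (1-\epsilon)\,\mathcal{G}_k(x_j^*,P_m)$ for every $j$, the sum is bounded by $\frac{l}{1-\epsilon}\,\mathcal{G}_k(x_{m+1},P_m) = \frac{l}{1-\epsilon}\big(f_k(P_{m+1}) - f_k(P_m)\big)$. Rearranging gives the recursion
\[
  \mathrm{OPT} - f_k(P_{m+1}) \;\le\; \Big(1 - \tfrac{1-\epsilon}{l}\Big)\big(\mathrm{OPT} - f_k(P_m)\big).
\]

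Iterating this recursion $n$ times from $f_k(P_0)=f_k(\emptyset)=0$ yields $\mathrm{OPT} - f_k(P_n) \le (1-\tfrac{1-\epsilon}{l})^n\,\mathrm{OPT}$, and then the elementary bound $(1 - x/l)^n \le e^{-nx/l}$ with $x = 1-\epsilon$ converts this into $f_k(P_n) \ge (1 - e^{-(1-\epsilon)n/l})\,\mathrm{OPT}$, which is exactly the claimed inequality (reading the $f_K(P_l^*)$ on the right as $f_k(P_l^*)$, the optimum of the same objective). One subtlety to check: Algorithm~\ref{alg:eps-greedy} has a stopping condition, so the statement implicitly assumes the loop runs for at least $n$ steps; if it halts earlier the coverage target is already met and the bound is of no interest, so the recursion argument applies to every step actually taken.

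**Main obstacle.** The only genuinely delicate point is justifying the chain $\mathrm{OPT} \le f_k(P_m \cup P_l^*)$ and the decomposition of $f_k(P_m\cup P_l^*) - f_k(P_m)$ into a sum of single-element marginal gains that are then each dominated via submodularity — i.e. making sure Proposition~\ref{prop:submodular} is applied to the correct nested pairs of sets and that the greedy maximizer dominates each $\mathcal{G}_k(x_j^*,P_m)$ simultaneously (it does, since it dominates the max over all of $\Omega_\text{free}$). Everything after the recursion is routine. I would also state explicitly that $\epsilon \in [0,1)$ guarantees $1-\epsilon>0$ so the geometric factor is strictly less than $1$ and the bound is nonvacuous.
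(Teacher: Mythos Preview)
Your proposal is correct and follows essentially the same route as the paper's proof: monotonicity gives $f_k(P_l^*)\le f_k(P_m\cup P_l^*)$, a telescoping sum plus submodularity bounds each marginal by $\mathcal{G}_k(x_j^*,P_m)$, the $\epsilon$-greedy rule converts this to $\tfrac{l}{1-\epsilon}(f_k(P_{m+1})-f_k(P_m))$, and the resulting geometric recursion together with $1-x\le e^{-x}$ yields the bound. Your explicit remarks that $f_k(\emptyset)=0$ and that the stopping condition is irrelevant for the steps actually taken are useful clarifications the paper leaves implicit.
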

Because $f_k$ is monotone, adding new sensors will always improve the sensor network. The theorem above gives us a measure of quality depending on the number of sensors placed by the greedy algorithm compared to a fixed number of optimally placed sensors. Picking $l = n$ gives us a bound about the quality of the placed sensor network compared to optimally placed sensors of the same amount. Below, we have listed this efficiency for commonly used values of $\epsilon$.\\
\begin{center}
\begin{tabular}{|c|r|r|r|}
\hline
 $\epsilon$ & $0$ & $0.01$ & $0.05$ \\
\hline
 $~{f_k(P_n)}/{f_k(P_l^*)}~$ &  $~0.6321$ & $~0.6284$ & $~0.6133$\\
\hline
\end{tabular}
\end{center}
This shows that the commonly used values for $\epsilon$ have little effect on the theoretical guarantee we derived.

\subsection{Parallelizing the greedy algorithm}
While the greedy algorithm is sequential by construction, we propose using an ensemble of greedy sequences in parallel. Each greedy sequence involves the gain function for a lower order of visibility. 
The algorithm can be summarized by the following recursion formula:
% Another simple approach to solving the problem described in Equation~\eqref{eqn:problem_statement} is to generate $k$ sets of sensor locations, each achieving single visibility. However, placing $k$ sensors at every chosen location is not robust against adversarial attacks. 
$$\mathbf{x}_{m+1}^{\ell} \in \argmax_{\mathbf{x} \in \Omega_{\text{free}}} \mathcal{G}_1(\mathbf{x}; \mathbf{x}_1^\ell, \dots,\mathbf{x}_m^\ell),~~~\ell=1,2,\cdots, k.$$
It is made precise in Algorithm~\ref{alg:round_robin} below.

% However, randomly placing the first sensor location introduces variance to the chosen locations and makes the solution more robust.

\begin{algorithm} %[H]
\caption{The parallel greedy algorithm}\label{alg:round_robin}
\begin{algorithmic}
\State $P_1,...,P_k = \text{list}()$

\For{$i = 1:k$}
    \While{$\int_{\Omega_{\text{free}}} \min \left\{\mathcal{O}_\text{vis}(y; P_i), 1 \right\}dy < \frac{\delta}{k} \lambda(\Omega_{\text{free}})$}
        \If{len($P_i$) = 0}
        \State Pick $x \in \Omega_\text{free}$ randomly
        \State $P_i$.append($x$)
        \Else
        \State $\mathcal{G}_1(\mathbf{x}, P_i)$ for every $x \in \Omega_{\text{free}}$
        \State compute $M = \max_{z \in \Omega_{\text{free}}} \mathcal{G}_1(\mathbf{z}, P_i)$
        \State choose $\mathbf{x}^* \in \{\mathbf{y} \in \Omega_{\text{free}}~\vert~ \mathcal{G}_1(\mathbf{y}, P_i) \geq (1-\epsilon) M\}$ randomly
        \State $P_i$.append($\mathbf{x}^*$)
        \EndIf
    \EndWhile
\EndFor
\State return $\bigcup_{i=1}^k P_i$
\end{algorithmic}
\end{algorithm}

% In Algorithm~2, we used the gain function $\mathcal{G}_1$, proposed in \cite{ly2018greedy} for the efficient solution of single coverage problems. 
The main advantage of this algorithm is that the separate computations of the single coverage problems (the use of $\mathcal{G}_1$) inside the for loop are independent and can, therefore, be performed in parallel. Further, in \cite{ly2018greedy}, it was shown that $\mathcal{G}_1$ can be efficiently approximated using a neural network with UNet architecture~\cite{ronneberger2015unet}. 
%Using this neural network to predict the gain function reduces the for loop to 3 feedforward calls. 
Performing these calls in parallel may optimize GPU utilization to its full potential capacity.

The algorithm generates $k$ sequences of sensor locations independently until the end of the parallel run. To solve the problem given by Equation~\eqref{eqn:problem_statement}, we need to ensure that the merged sensor set covers a volume of $(1-\delta) \lambda(\Omega_\text{free})$ with order $k$. In the worst case, the regions not entirely observed by the separate sensor sequences are completely disjoint. 
%Thus, 
%to guarantee Equation~\eqref{eqn:problem_statement}, 
Accordingly, we prescribe a stricter termination criterion of covering a volume of $\left(1-\frac{\delta}{k} \right) \lambda(\Omega_\text{free})$.

\subsection{Learning the gain function}

The evaluation of the gain function can be costly, especially for large maps. Because we use the gain function to choose the next sensor location as close to the maximum, evaluating the gain function $\mathcal{G}_k$ at every possible point of sensor placement is necessary. Additionally, evaluating $\mathcal{G}_k$ is very costly since the visibility function is the solution of a global partial differential equation. Assume we want to place $K$ sensors on an $M \times M$ grid in $\Omega_{\text{free}} \subset \mathbb{R}^2$ to achieve target order of visibility $k$. The algorithm described in \cite{tsai2004visibility} has a complexity of $\mathcal{O}(M^2)$ for the computation of the visibility function. For the evaluation of $\mathcal{G}_k$ at one point, we need to compute visibility once, update $\mathcal{O}_\text{vis}$ and compute volumes. Updating $\mathcal{O}_\text{vis}$ has complexity $\mathcal{O}(M^2k\log k)$ and computing the volumes is $\mathcal{O}(M^2k)$. This yields a total computational complexity of 
$$\mathcal{O}(KM^2k\log k)$$

To speed up the evaluation of $\mathcal{G}_k$, we propose a deep learning strategy in which a neural network is trained to predict the gain function at every possible sensor location for a class of environments.
Once trained, this allows for a quick evaluation of the gain function and, therefore, a considerable speed-up in applying the greedy algorithm. 
From an initial sensor location, the greedy algorithm defines a discrete dynamical system, parameterized by the environment $\Omega_{free}$, that yields a sequence of ``images'' $\mathcal{G}_k,$ $k=0,1,2,...$. 
It is essential that the training data set adequately samples the causality defined by the greedy algorithm.

\paragraph{The neural network.} 
We chose the TiraFL architecture described in \cite{jégou2017layers} to learn $\mathcal{G}_k$ from suitable training examples. In these examples, we assume that the obstacles are described using the graph of a function $h_\text{obs}$:
$$\Omega_\text{free} = \{ (x,y,z)\in\Omega: h_\text{obs}(x,y) > z\}.$$
Then for $\mathbf{x} \in \Omega_\text{free}$, $\phi_{\mathbf{x}}$ can also be described by a function $g_{\mathbf{x}}$: 
$$\phi_{\mathbf{x}}(\mathbf{z}) = 1 \iff g_{\mathbf{x}}(z_1,z_2) \leq z_3,~~~\mathbf{z}=(z_1, z_2, z_3).$$
Correspondingly, the sensor n-coverage can be described by
$$\Psi_n(\mathbf{z}, P) = \nmin_{\mathbf{x} \in P} g_{\mathbf{x}}(\mathbf{z}),$$
where 
$\nmin_{\mathbf{x} \in P} g_\mathbf{x} = \max \left\{ \lambda \in \left\{ g_\mathbf{x} \text{: } \mathbf{x} \in P \right\} \text{: } \vert \left\{ g_\mathbf{x} \text{: } \mathbf{x} \in P \right\} \cap (-\infty, \lambda) \vert < n \right\}$\\
returns the n-th smallest value in $\{g_\mathbf{x}(z_1, z_2) \text{: } \mathbf{x} \in P\}$. For example, suppose $\left\{ g_\mathbf{x} \text{: } \mathbf{x} \in P \right\} = \{42, 4, 1337, 69\}$ then
$$\onemin_{\mathbf{x} \in P} g_{\mathbf{x}}(\mathbf{z}) = 4, \quad \twomin_{\mathbf{x} \in P} g_{\mathbf{x}}(\mathbf{z}) = 42, \quad \threemin_{\mathbf{x} \in P} g_{\mathbf{x}}(\mathbf{z}) = 69.$$

We use a neural network to approximate $\mathcal{G}_k$ in the following fashion:
$$\mathcal{G}_k^\theta (h_\text{obs}, \Psi_1(\cdot, P),...,\Psi_k(\cdot, P)) \approx 
%\begin{bmatrix}
    [ \mathcal{G}_k(\cdot, P),
    V_1(\cdot, P),
    \cdots,
    V_k(\cdot, P)]^T,
%\end{bmatrix},
$$
where 
\begin{equation}\label{eqn:Vi}
  V_i(\mathbf{x}, P) = \int_{\Omega_\text{free}} \left( \mathbbm{1}_{\{\mathcal{O}_{\text{vis}}(\mathbf{z}, P \cup \{\mathbf{x}\}) \geq i\}} - \mathbbm{1}_{\{\mathcal{O}_{\text{vis}}(\mathbf{z}, P) \geq i\}} \right) d\mathbf{z}.  
\end{equation}
For the training, we use the Adam optimization algorithm.

\paragraph{The training data sets}\label{sec:data}
The training examples are generated by running Algorithm~\ref{alg:eps-greedy} on an environment from a data set. We use random crops of Massachusetts building footprints from \cite{MnihThesis}. Then, using the flood-fill algorithm, we distinguish the separate buildings and assign them a random height in $(0,1)$, making it a 3D environment. 

For these environments, we run the greedy algorithm with target visibility order $k = 3$ until 99\% of $\Omega_{\text{free}}$ has visibility of order 3. In every iteration of the algorithm, we store the tensor $\Psi$ as described above and the matrix containing the height map of the environment as the input. 

%\comment{The termination criterion for the data generation is very strict. It would be interesting to analyze what effect that has on the training of the network.}

As labels for the data set, we use a matrix containing evaluations of the gain function using $w_1 = 1, w_2 = \frac{1}{2}, w_3 = \frac{1}{4}$ at every grid point as well as matrices containing evaluations of $V_i(x_1, x_2, P)$ as defined in Equation \eqref{eqn:Vi} at every grid point. Storing these in addition to the gain function allows for different weights $w_1,...,w_3$ without retraining the neural network. 

In the following, we look at three different data sets.
\begin{center}
\begin{tabular}{| c | r | r | r |}
\hline
    &$D_0$ & $D_\epsilon$ & $D_+$\\
\hline
    \# Data Points & ~25,202 & ~11,924 & ~24,697 \\
    $\epsilon$ & $0$ & $0.05$ & $~0.0$ and $0.05$\\
\hline
\end{tabular}
\end{center}
In the above, $D_+$ consists of $D_\epsilon$ and 12,773 data points from $D_0$.  A link to the data sets will be shared if the paper is accepted. 

We shall use the notation $\mathcal{G}^\theta_k[D]$ to denote a network trained with the data set $D$. In the next Section, we will compare the performance of the greedy algorithm using $\mathcal{G}^\theta_k[D_0],$ $\mathcal{G}^\theta_k[D_\epsilon],$ and $\mathcal{G}^\theta_k[D_+].$

% \subsection{Network Architecture}
% As the network architecture, we chose the TiraFL architecture described in \cite{jégou2017layers}. Following the naming convention, the table below shows the hyperparameters chosen for our application.\\

% \begin{center}
% \begin{tabular}{| l | l |}
%     \hline
%     in\_channels & 4 \\
%     \hline
%     down\_blocks & (4,5,7,7,12,12)\\
%     \hline
%     up\_blocks & (12,12,7,7,5,4)\\
%     \hline
%     bottleneck\_layers & 15\\
%     \hline
%     growth\_rate & 16\\
%     \hline
%     out\_chans\_first\_conv & 48\\
%     \hline
%     n\_classes & 4\\
%     \hline
% \end{tabular}
% \end{center}

\section{Numerical experiments}
In this Section, we present some numerical experiments using the proposed algorithms and compare their performance. 
We used environments from the same distribution described in Section~\ref{sec:data} in the experiments.

We apply Algorithm~\ref{alg:eps-greedy} and Algorithm~\ref{alg:round_robin} to the environment under the assumption that we are only allowed to place sensors in the streets (at height 0) of the environment. This is closely related to the practical problem of computing optimal surveillance camera positions in an urban environment.

We will compare the performance of Algorithm~\ref{alg:eps-greedy} using different approximations to $\mathcal{G}_k$ and Algorithm~\ref{alg:round_robin} using $\mathcal{G}_1$. 

In the numerical experiments, we use a $128\times128$ uniform Cartesian grid over $\Omega$. On this grid, a brute force computation of $\mathcal{G}_k$ by optimized C++ code took (represenatively) $1.44 s$ on an Intel Xeon Gold 6248R CPU where the evaluation of the gain function is performed in parallel on 24 cores. In comparison the feedforward evaluation of the network took $0.08 s$ on a Nvidia V100 GPU.

%This significantly improves computation speed. 
% The acceleration by GPUs is utilized especially when the network is used to infer multiple gain functions 
% in parallel. \comment{This is like using multiple cores to run your C++ codes in parallel. Not sure it is worth mentioning.} The table below shows computation times of different amounts of  multiple evaluations of the gain function are computed. 

% \begin{tabular}{|l | l | l|}
% \hline
%      Number of Evaluations & Time Network (GPU) & Time Greedy (C++, CPU) \\
% \hline
%      1 & 0.08 s & 1.44 s \\
%      3 & 0.48 s & 4.33 s\\
%      5 & 0.78 s & 7.12 s\\
%      10 & 0.29 s & 14.37 s\\
%      50 & 13.3 s & 70.5 s\\
% \hline
% \end{tabular}

\subsection{Results from Algorithm~\ref{alg:eps-greedy} using $\mathcal{G}_k$}
\begin{figure}
    \centering
\includegraphics[width = 0.9\linewidth]{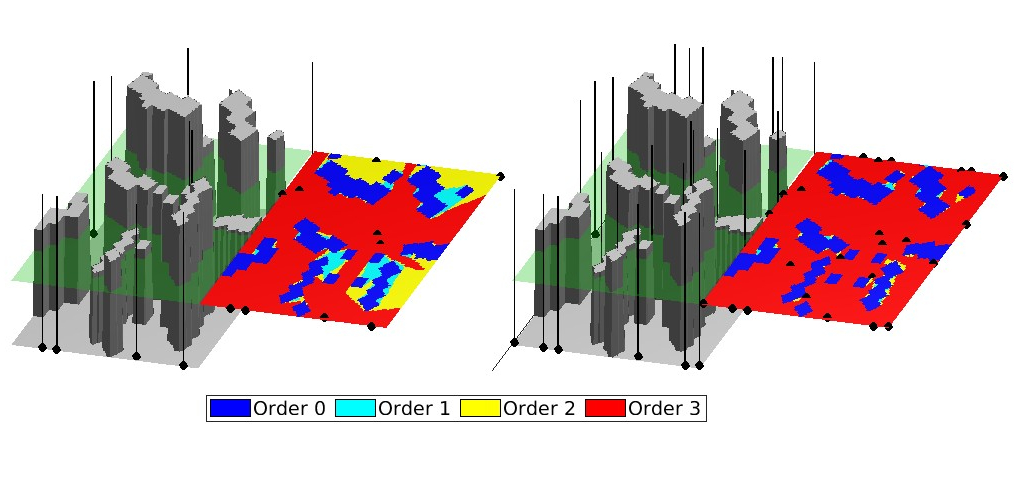}
    \caption{Algorithm~\ref{alg:eps-greedy}. Left: 10 sensors places, 45\% order 3 coverage. Right: 24 sensors places, 90\% order 3 coverage.}
    \label{fig:greedy_result}
\end{figure}
% \begin{figure}[H]
%     \centering
%     \includegraphics[width = 0.8\linewidth]{images/Obstacle_Example.jpg}
%     \caption{3 dimensional urban environment}
%     \label{fig:obstacle_example}
% \end{figure}

As an initial example, we use the three-dimensional environment shown in Figure~\ref{fig:obstacle}. We then use  Algorithm~\ref{alg:eps-greedy} (with brute force computation of $\mathcal{G}_k$ to compute the sequence of optimal sensor locations using $\epsilon = 0.01$ and $\mathcal{G}_k$ defined in \eqref{eq:f_k}-\eqref{eq:gain_in_fk} as a gain function. We chose the target order of visibility $k = 3$, and the algorithm terminates when 90\% of the free space is observed with order 3. Two snapshots of the simulation are demonstrated in Fig.~\ref{fig:greedy_result}.
In this Figure, as well as the subsequent Figures, the black lines further enhance the visibility of the black dots, making sure that they are visible even if hidden behind parts of the environment. The plane to the right of it shows the values of $\mathcal{O}_\text{vis}$ of the slice indicated by the green plane.

From figure~\ref{fig:greedy_result}, we observe that with 10 sensors, the majority of the free space in the slice is already observed. However, only with order 1 visibility. To cover 90\% of the free space with order 3 visibility, it was necessary to place 24 sensors, as shown in the right figure.

% Now, we only allow the placement of sensors on rooftops using the same environment shown in Figure~\ref{fig:obstacle_example}. This is closely related to the problem of computing optimal locations for 5G towers.

% \begin{figure}[H]
%     \centering
%     \includegraphics[H]{images/Greedy_Example_RoofTop_Result.png}
%     \caption{Resulting rooftop sensor network}
%     \label{fig:enter-label}
% \end{figure}

% \comment{add remarks on results}

\subsection{Results from Algorithm~\ref{alg:round_robin} using $\mathcal{G}_1$ on three parallel runs}

Figure~\ref{fig:roundrobin_results} shows two stages of Algorithm~\ref{alg:round_robin} using $\mathcal{G}_1$ and 3 parallel runs. The left figure shows the order of the visibility map after 9 sensors, i.e., 3 from each separate run, have been placed. We see that the map has already been covered with order 1 visibility, and a large portion is covered with order 3 visibility. The right figure shows the terminal stage of the algorithm. 18 sensors were needed to cover 90\% of the free space with order 3 visibility. 

\begin{figure}
    \centering
    \includegraphics[width = 0.9\linewidth]{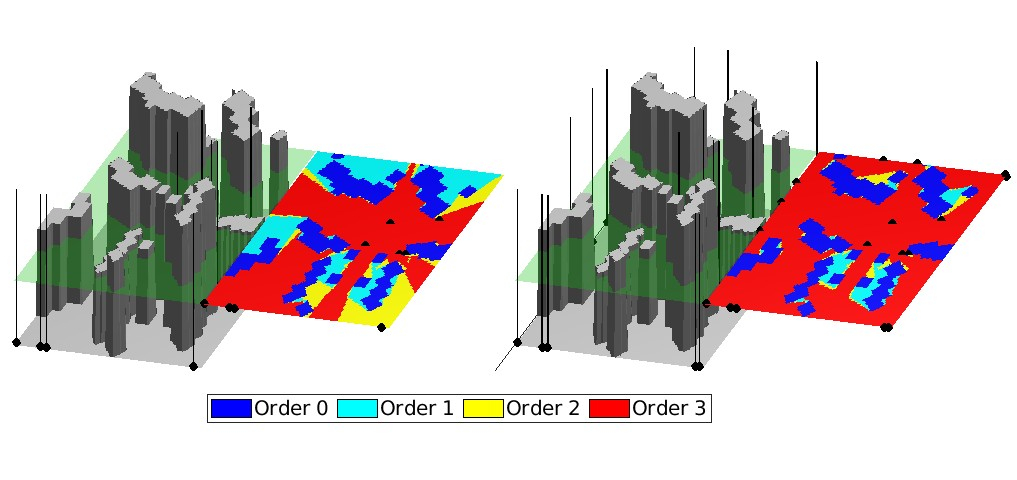}
    \caption{Algorithm~\ref{alg:round_robin} Left: 9 sensors placed, 43\% order 3 coverage. Right: 18 sensors places, 90\% order 3 coverage.}
    \label{fig:roundrobin_results}
\end{figure}

\subsection{Results from Algorithm~\ref{alg:eps-greedy} using the learned gain functions}\label{sec:net_results}

We apply Algorithm~\ref{alg:eps-greedy} using gain functions defined by neural networks described in Section~\ref{sec:methods}.
Two snapshots of a simulation using 
$\mathcal{G}^\theta_k[D_+]$, a network trained with the data set $D_+$ and $\epsilon=0.01$, are presented 
in Fig.~\ref{fig:net_res}.

% \begin{figure} %[H]
%     \centering
%     \includegraphics[width = 0.8\linewidth]{}
%     \caption{\comment{which method?} Achieved visibility with ten sensors. 21\% order 3 coverage has been achieved.}
%     \label{fig:net10}
% \end{figure}

 % We can observe the vast majority of $\Omega_\text{free}$ already, but only with order one visibility. 

% The left plot in Figure~\ref{fig:net_res} shows the order of visibility after 20 sensors have been placed. The robust sensor network already covers most of the free space with order three visibility. However, there are still places where visibility is not optimal. The right plot in Figure~\ref{fig:net_res} shows the order of visibility after the algorithm terminates wh% en 90\% of the free space is covered with order three visibility.

% To further quantify the quality of the network predictions, we perform a statistical analysis of the results for 50 sample maps. For this analysis, we again look at the sensors needed to reach 90\% coverage of visibility of order 3. 

%\paragraph{Geometry of the training data sets.}

Our extensive experiments show that the neural network trained on $D_0$ performs poorly compared to the one trained on $D_+$, even though the two data sets have roughly the same amount of data points (see Section~\ref{sec:data}). 

To explain the possible cause, we analyze the geometric properties of the data sets using the coordinate frame constructed by the
principal value analysis of $D_+$.
In Figure~\ref{fig:geometry}, we compare the variances of the projections of X points uniformly sampled from each of $D_0$, $D_\epsilon$, and $D_+$ onto this coordinate frame.
Notice that the distribution of $D_+$ is significantly fuller than $D_0$. We think this may cause the performance issue if one trained a network with only $D_0$ -- when the network makes inferences for inputs far from the center of the distribution.
% From the singular values, we learn that while $D_+$ and $D_0$ have a comparable structure,  $D_\epsilon$ forms a significantly thinner data manifold. This could be an explanation for the poor performance of $D_\epsilon$.

% \comment{change this description to fit the plot}
% For this analysis, we flatten the inputs of the data sets to vectors $v_i$. These are then centered using
% $$\Tilde{v}_i = v_i - \frac{1}{N} \sum_{i=1}^N v_i$$
% where $N$ is the amount of data points in the data set. The centered vectors are then considered rows of a matrix on which a singular value decomposition is performed. The resulting singular values can be interpreted as the variance of the data points along the principle directions and, therefore, encode important information about the geometry of the data set. 

\begin{figure} %[H]
    \centering
    \includegraphics[width = 0.9\linewidth]{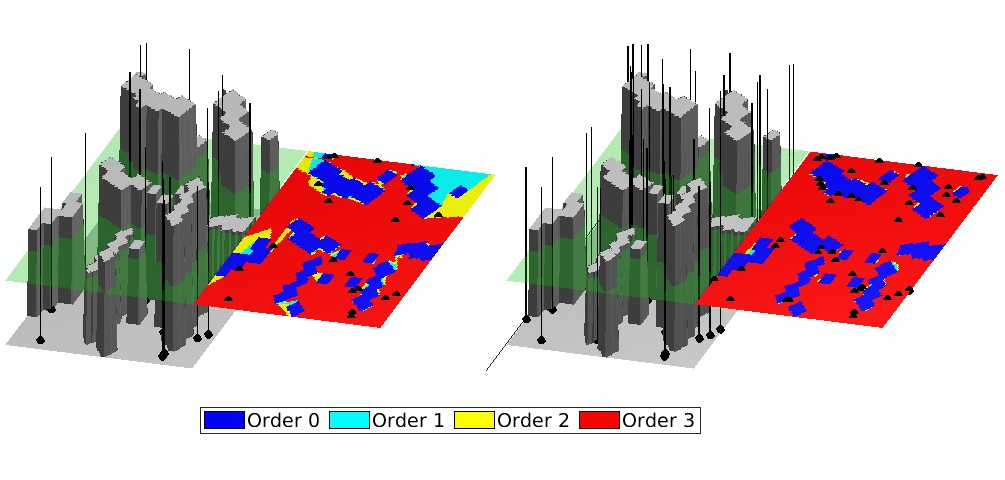}
\caption{Algorithm~\ref{alg:eps-greedy} using $\mathcal{G}^\theta_k[D_+]$.  Left: 20 sensors placed, 57\% order 3 coverage. Right: 47 sensors placed, 90\% order 3 coverage.}
    \label{fig:net_res}
\end{figure}

\begin{figure} %[H]
    \centering
    \includegraphics[width = 0.6\linewidth]{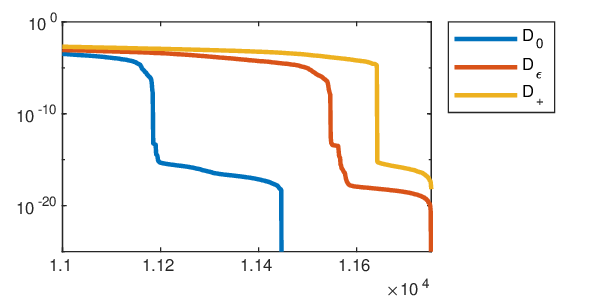}
    \caption{Ordered values of the singular values starting from the 11,000-th singular value.}
    \label{fig:geometry}
\end{figure}

\begin{figure}
    \centering
    \includegraphics[width = 0.5\linewidth]{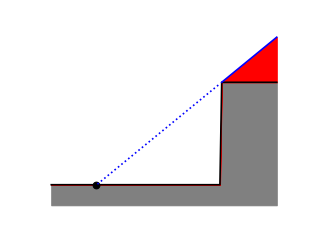}
    \caption{Urban environment. 
    %The gray area represents the occluded region. 
    The red area is impossible to observe from the ground. }
    \label{fig:rooftop}
\end{figure}

\subsection{Performance statistics and comparisons}

To evaluate the algorithms' performance, we conducted a statistical analysis of the results by randomly sampling 50 environments from the same distribution described in Section~\ref{sec:data}. As depicted in Figures~\ref{fig:greedy_result}-\ref{fig:net_res}, the volume of fully observed regions increases quickly in the initial stages of the algorithms and then stagnates. To investigate this further, we analyzed the number of sensors required for the percentage of fully observed free space to reach a fixed threshold with a maximum of 200 sensors placed. In Figures~\ref{fig:box}~and~\ref{fig:box_70}, the results are presented in box-plots indicating the 25\%, 50\%, and 75\% quantiles, illustrating the distribution of the needed number of sensors across the different sample environments.

In our analysis, we examined the behavior of various algorithms at different stages of iterations in their simulations. We used a threshold of 70\% in Figure~\ref{fig:box_70} to investigate the algorithms' performance in the earlier stages of simulations. We observed no significant performance gap, even with the random sensor placement. 

In Figure~\ref{fig:box}, we increased the threshold to 90\%, corresponding to the algorithm's late stages. The random point placement became significantly less effective. Not surprisingly, Algorithm~\ref{alg:eps-greedy} and its neural network approximation 
had more consistent performance and reached the threshold with less than 100 sensors most of the time. Furthermore, Algorithm~\ref{alg:round_robin} seemed as competitive as Algorithm~\ref{alg:eps-greedy}!

We are also interested in understanding how quickly the observed volume increases as one adds more sensors at locations prescribed by the algorithms. We show in Figure~\ref{fig:scatter} scatter plots of the number of sensors versus the achieved percentage of full (multiple) coverage within the 50 sample maps. Figure~\ref{fig:scatter} shows that Algorithm~\ref{alg:eps-greedy}, its neural network approximation as well as Algorithm~\ref{alg:round_robin} achieve the threshold more consistently and with fewer sensors. 

\paragraph{End-stage performance and premature saturation of sensors' coverage.}
In our implementation, the algorithms do not terminate when the coverage increment becomes small. 
Therefore, the algorithm may continually add new sensors when the sensors' coverage saturates before reaching the preset termination coverage. See, in Figure~\ref{fig:scatter}, the horizontal blue lines around the threshold coverage.
This premature saturation is possible because, in our simulations, we restrict the sensors to the ground level (even though the visibility and gain computations are performed in three dimensions). See Figure~\ref{fig:rooftop}. 
For this reason, we suspect that our greedy algorithms' actual statistical performance is better. 
We leave this for a future investigation.

\section{Conclusion}
This paper presented a greedy (next-best-view) formulation for optimal multiple-view surveillance of complex environments. The optimization refers to the objective of using the fewest sensors for such tasks. 
We showed that the proposed relaxed form of the optimization problem can be solved efficiently and derived theoretical lower bounds for efficiency. We introduced a Deep Learning strategy to accelerate the computation of the greedy algorithm. We discovered that minor differences in the training data set's distribution could result in significant differences in the network's inference performance. 

The proposed parallel greedy algorithm, presented as Algorithm~\ref{alg:round_robin}, showed very promising performance. It may provide a new way of designing ``next-best-view'' algorithms for a group of agents in robotic applications. We plan to investigate the analytical properties of this approach in the future.

% We observe a few horizontal "spikes" in the scatter plots. \comment{These are attributable to the restriction of only placing sensors in the street. When the environment contains large tall buildings it is possible that the volume of regions observable from the ground is below 90\% of $\Omega_\text{free}$. The spikes represent the percentage of the volume of observable region.}

%Those correspond to the simulations "getting stuck" on some maps --- the 

% The black lines further enhance the visibility of the black dots, making sure that they are visible even if hidden behind parts of the environment. The plane to the right of it shows the values of $\mathcal{O}_\text{vis}^k$ of the slice indicated by the green plane.

% \begin{figure}
%     \centering
%     \includegraphics[width = 0.7\linewidth]{}
%     \caption{Graph shows number of sensors placed vs mean order 3 coverage of the free space taken over 50 sample maps.}
%     \label{fig:vis_curve}
% \end{figure}

\begin{figure}
    \centering
    \includegraphics[width = 0.8\linewidth]{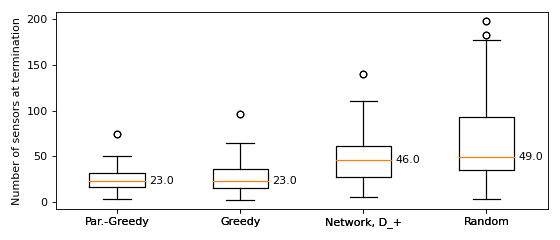}
    \caption{Box-plot of the number of sensors needed to achieve 90\% order 3 coverage. 
    %for 50 sample maps. 
    }
    \label{fig:box}
\end{figure}

\begin{figure}
    \centering
    \includegraphics[width = 0.8\linewidth]{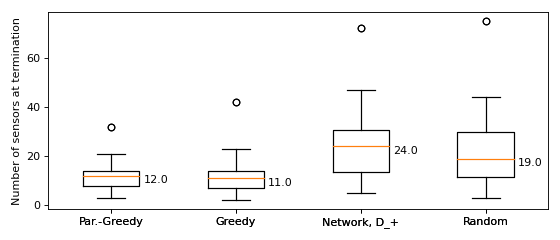}
    \caption{Box-plot of the number of sensors needed to achieve 70\% order 3 coverage. 
    %for 50 sample maps.
    }
    \label{fig:box_70}
\end{figure}

\begin{figure}
    \centering
    \includegraphics[width = \linewidth]{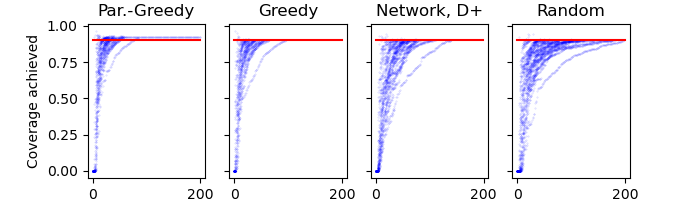}
    \caption{The number of sensors needed for order 3 visibility in the simulations run over 50 different sample maps. For all of the methods shown, we chose $\epsilon = 0.01$.}
    \label{fig:scatter}
\end{figure}

\section{Acknowledgements}
Taus is supported by Army Research Office, under Cooperative Agreement Number W911NF- 19-2-0333. Tsai is partially supported by Army Research Office, under Cooperative Agreement Number W911NF-19-2-0333 and National Science Foundation Grant DMS-2110895.

\appendix

\section{Proofs}

In this section we provide mathematical proofs for the statements made in the main paper. We also provide additional statements that help clarify the reasoning in the paper.

\begin{prop}(Monotonicity of $f_k$)
     If $w_i \geq 0$ for all $1\le i\le k$, then 
     $$f_k(A) \leq f_k(B),$$
      for all countable sets $A \subseteq B \subseteq \Omega_\text{free}.$ 
\end{prop}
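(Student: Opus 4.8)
The plan is to reduce the statement to the elementary monotonicity of the Lebesgue integral, exploiting that enlarging the sensor set can only increase the order of visibility pointwise. First I would observe that since each $\phi_\mathbf{x}$ is nonnegative, for every point $\mathbf{z} \in \Omega_\text{free}$ and all countable sets $A \subseteq B$ we have
$$\mathcal{O}_\text{vis}(\mathbf{z}; A) = \sum_{\mathbf{x} \in A} \phi_\mathbf{x}(\mathbf{z}) \leq \sum_{\mathbf{x} \in B} \phi_\mathbf{x}(\mathbf{z}) = \mathcal{O}_\text{vis}(\mathbf{z}; B),$$
and if the order-of-visibility is understood to be capped at $k$, the inequality survives applying $t \mapsto \min\{t,k\}$ since that map is nondecreasing. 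Here the sums are well defined as elements of $[0,\infty]$ precisely because all terms are nonnegative, which is also why the hypothesis restricts to countable $A$ and $B$.

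Next I would translate this pointwise inequality into an inclusion of super-level sets: for each fixed $i \in \{1,\dots,k\}$,
$$\{\mathbf{z} \in \Omega_\text{free} : \mathcal{O}_\text{vis}(\mathbf{z}; A) \geq i\} \subseteq \{\mathbf{z} \in \Omega_\text{free} : \mathcal{O}_\text{vis}(\mathbf{z}; B) \geq i\},$$
so that $\mathbbm{1}_{\{\mathcal{O}_\text{vis}(\mathbf{z}, A) \geq i\}} \leq \mathbbm{1}_{\{\mathcal{O}_\text{vis}(\mathbf{z}, B) \geq i\}}$ for every $\mathbf{z}$. Integrating over $\Omega_\text{free}$ and using monotonicity of the integral gives
$$\int_{\Omega_\text{free}} \mathbbm{1}_{\{\mathcal{O}_\text{vis}(\mathbf{z}, A) \geq i\}}\, d\mathbf{z} \leq \int_{\Omega_\text{free}} \mathbbm{1}_{\{\mathcal{O}_\text{vis}(\mathbf{z}, B) \geq i\}}\, d\mathbf{z}.$$
Multiplying the $i$-th inequality by $w_i \geq 0$ and summing over $i = 1,\dots,k$ then yields $f_k(A) \leq f_k(B)$, which is the claim. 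Note that the ordering hypothesis $w_{i+1} \le w_i$ and the submodularity result are not needed here; only $w_i \ge 0$ enters.

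The argument has no genuinely hard step; the only points deserving a line of care are measurability and the cap. For measurability of the super-level sets (so that the integrals above make sense) I would write, for countable $P$,
$$\{\mathbf{z} \in \Omega_\text{free} : \mathcal{O}_\text{vis}(\mathbf{z}; P) \geq i\} = \bigcup_{\substack{F \subseteq P \\ |F| < \infty}} \Bigl\{\mathbf{z} \in \Omega_\text{free} : \textstyle\sum_{\mathbf{x} \in F} \phi_\mathbf{x}(\mathbf{z}) \geq i\Bigr\},$$
a countable union of measurable sets, each measurable because every $\phi_\mathbf{x}$ is; and the cap at $k$ is handled by the monotonicity of $\min\{\cdot,k\}$ noted above. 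So the ``main obstacle'' is really just bookkeeping about the well-definedness of $\mathcal{O}_\text{vis}$ for infinite (countable) sensor sets.
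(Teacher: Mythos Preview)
Your proof is correct and follows essentially the same approach as the paper: pointwise monotonicity of $\mathcal{O}_\text{vis}$ in the sensor set, hence monotonicity of the indicator functions $\mathbbm{1}_{\{\mathcal{O}_\text{vis}(\cdot,P)\ge i\}}$, hence monotonicity of $f_k$ by integrating and summing with nonnegative weights. The only cosmetic difference is that the paper first shows $f_k(A)\le f_k(A\cup\{x\})$ and then ``inducts'' to reach arbitrary countable $B\supseteq A$, whereas you compare $A$ and $B$ directly; your route is arguably cleaner since the paper's induction, taken literally, only covers finite $B\setminus A$, and your added remarks on measurability and the cap are harmless extra care the paper omits.
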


\begin{proof}
    Suppose $A \subseteq \Omega_\text{free}$ and $x,z \in \Omega_\text{free}$. Then
    $$\mathcal{O}_\text{vis}(z, A \cup \{ x \}) = \sum_{y \in A \cup \{x\}} \phi_y(z) \geq \sum_{y \in A} \phi_y(z) = \mathcal{O}_\text{vis}(z, A).$$
    Since $\phi_y \geq 0$ as it is an indicator function.
    Thus for any $i \in \{1,...,k\}$ $\mathcal{O}_\text{vis}(z, A) \geq i$ implies $\mathcal{O}^k_\text{vis}(z, A \cup \{ x \}) \geq i$ and therefore
    $$ \mathbbm{1}_{\{\mathcal{O}_{\text{vis}}(z, A \cup \{ x \}) \geq i\}} \geq \mathbbm{1}_{\{\mathcal{O}_{\text{vis}}(z, A) \geq i\}}.$$
    By monotonicity of integrals and the fact that all $w_i \geq 0$ this shows that
    $$f_k(A \cup \{ x \}) \geq f_k(A)$$
    and by induction therefore also
    $$f_k(A) \leq f_k(B)$$
    for any countable set $B \supseteq A$.
\end{proof}

\begin{prop}(Submodularity of $f_k$)
    If $w_{i+1} \leq w_i$ for all $1\le i \le k-1 $,  then
    $$\mathcal{G}_k(\mathbf{z}, A) \geq \mathcal{G}_k(\mathbf{z}, B),$$
    for any countable sets $A \subseteq B \subseteq \Omega_\text{free}$ and any $\mathbf{z} \in \Omega_\text{free}$.
\end{prop}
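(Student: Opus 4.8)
The plan is to reduce the inequality $\mathcal{G}_k(\mathbf{z},A)\ge\mathcal{G}_k(\mathbf{z},B)$ to a pointwise statement about the integrand of the marginal gain $\mathcal{G}_k(\mathbf{z},P)=f_k(P\cup\{\mathbf{z}\})-f_k(P)$. First I would fix $\mathbf{y}\in\Omega_\text{free}$, a finite set $P$ with $\mathbf{z}\notin P$, and examine the change in the summand $\sum_{i=1}^k w_i\,\mathbbm{1}_{\{\mathcal{O}_\text{vis}(\mathbf{y},P)\ge i\}}$ caused by adding $\mathbf{z}$. Since $\mathcal{O}_\text{vis}(\mathbf{y},P\cup\{\mathbf{z}\})=\mathcal{O}_\text{vis}(\mathbf{y},P)+\phi_{\mathbf{z}}(\mathbf{y})$, the difference vanishes when $\phi_{\mathbf{z}}(\mathbf{y})=0$; when $\phi_{\mathbf{z}}(\mathbf{y})=1$, writing $m=\mathcal{O}_\text{vis}(\mathbf{y},P)$, the sum telescopes and only the $i=m+1$ term survives, so the difference equals $w_{m+1}$ if $1\le m+1\le k$ and $0$ otherwise (the cap of $\mathcal{O}_\text{vis}$ at $k$, if used, is immaterial here since every indicator uses an index $i\le k$). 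Adopting the convention $w_j:=0$ for $j>k$, integrating over $\Omega_\text{free}$ gives the clean formula
$$\mathcal{G}_k(\mathbf{z},P)=\int_{\Omega_\text{free}}\phi_{\mathbf{z}}(\mathbf{y})\,w_{\mathcal{O}_\text{vis}(\mathbf{y},P)+1}\,d\mathbf{y}.$$

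Next I would invoke monotonicity of $\mathcal{O}_\text{vis}$ in its set argument — already established in the proof of Proposition~\ref{prop:monotone}: since each $\phi_{\mathbf{x}}\ge 0$, the set inclusion $A\subseteq B$ forces $\mathcal{O}_\text{vis}(\mathbf{y},A)\le\mathcal{O}_\text{vis}(\mathbf{y},B)$ pointwise. The hypothesis $w_{i+1}\le w_i$ for $1\le i\le k-1$, together with $w_k\ge 0$ (which, under $w_{i+1}\le w_i$, amounts to the weights being nonnegative, consistent with the standing assumption used in Proposition~\ref{prop:monotone}), says exactly that the extended sequence $j\mapsto w_j$ is non-increasing on $\{1,2,\dots\}$. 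Hence for every $\mathbf{y}$,
$$w_{\mathcal{O}_\text{vis}(\mathbf{y},A)+1}\ \ge\ w_{\mathcal{O}_\text{vis}(\mathbf{y},B)+1}.$$
Multiplying by $\phi_{\mathbf{z}}(\mathbf{y})\ge 0$, integrating over $\Omega_\text{free}$, and using the clean formula above yields $\mathcal{G}_k(\mathbf{z},A)\ge\mathcal{G}_k(\mathbf{z},B)$. The degenerate cases are handled separately: if $\mathbf{z}\in B$ but $\mathbf{z}\notin A$ then $\mathcal{G}_k(\mathbf{z},B)=0$ while $\mathcal{G}_k(\mathbf{z},A)\ge 0$ by Proposition~\ref{prop:monotone}; if $\mathbf{z}\in A$ then both gains are $0$.

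I expect the only real obstacle to be the first step: carefully verifying the telescoping identity for the increment of $\sum_i w_i\mathbbm{1}_{\{\mathcal{O}_\text{vis}\ge i\}}$ when one sensor is added, including the boundary behaviour when $\mathcal{O}_\text{vis}(\mathbf{y},P)$ is already at (or above) $k$ and the associated zero-extension convention for $(w_j)$. Once the pointwise formula for $\mathcal{G}_k$ is in hand, the conclusion is just monotonicity of the integral. It is worth emphasizing in the write-up why the decreasing-weights hypothesis is essential and cannot be replaced by "nonnegative combination of submodular coverage functionals'': for a single level $i\ge 2$, the functional $P\mapsto\int_{\Omega_\text{free}}\mathbbm{1}_{\{\mathcal{O}_\text{vis}(\cdot,P)\ge i\}}$ is \emph{not} submodular, since its pointwise marginal gain $\mathbbm{1}_{\{m=i-1\}}$ is not monotone in $m=\mathcal{O}_\text{vis}$; it is precisely the weighted sum with $w_1\ge\cdots\ge w_k\ge 0$ that collapses the marginal gain into the single non-increasing quantity $w_{m+1}$.
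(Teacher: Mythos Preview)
Your proof is correct and follows essentially the same route as the paper: both compute the marginal gain by splitting on $\phi_{\mathbf{z}}(\mathbf{y})\in\{0,1\}$, obtain a pointwise expression for the increment (your compact form $w_{\mathcal{O}_\text{vis}(\mathbf{y},P)+1}$ is exactly what the paper's Abel-summation formula $w_1 - w_k\mathbbm{1}_{\{\cdot\ge k\}}+\sum_i(w_{i+1}-w_i)\mathbbm{1}_{\{\cdot\ge i\}}$ evaluates to), and then compare $A$ and $B$ using monotonicity of $\mathcal{O}_\text{vis}$ together with $w_k\ge 0$ and $w_{i+1}\le w_i$. Your explicit flagging that $w_k\ge 0$ is needed beyond the stated hypothesis matches what the paper ends up using as well.
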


\begin{proof}
    Let $P$ be an arbitrary countable subset of $\Omega_\text{free}$. Then 
    $$\int_{\Omega_\text{free}} \mathbbm{1}_{\{\mathcal{O}_{\text{vis}}(z, P \cup \{ x \}) \geq i\}} dz = \int_{\Omega_\text{free}} (1 - \phi_x(z) + \phi_x(z))\mathbbm{1}_{\{\mathcal{O}_{\text{vis}}(z, P \cup \{ x \}) \geq i\}} dz.$$
    Splitting up the integral and treating the terms separately we see that
    $$\int_{\Omega_\text{free}} (1 - \phi_x(z))\mathbbm{1}_{\{\mathcal{O}_{\text{vis}}(z, P \cup \{ x \}) \geq i\}} dz = \int_{\{z \in \Omega_\text{free} \text{: } \phi_x(z) = 0\}} \mathbbm{1}_{\{\mathcal{O}_{\text{vis}}(z, P \cup \{ x \}) \geq i\}} dz$$
    and
    $$\int_{\Omega_\text{free}} \phi_x(z)\mathbbm{1}_{\{\mathcal{O}_{\text{vis}}(z, P \cup \{ x \}) \geq i\}} dz = \int_{\{z \in \Omega_\text{free} \text{: } \phi_x(z) = 1\}} \mathbbm{1}_{\{\mathcal{O}_{\text{vis}}(z, P \cup \{ x \}) \geq i\}} dz$$
    On the sub-domain where $\phi_x \equiv 0$
    $$\mathcal{O}_\text{vis}(z, P\cup \{ x \}) = \mathcal{O}_\text{vis}(z, P).$$
    On the other hand on the sub-domain where $\phi_x \equiv 1$
    $$\mathcal{O}_\text{vis}(z, P\cup \{ x \}) = 1 + \sum_{y \in P} \phi_y(z)$$
    and thus for $j \in \{1,...,k\}$
    $$\mathcal{O}_\text{vis}(z, P\cup \{ x \}) \geq j \iff \mathcal{O}_\text{vis}(z, P) \geq j-1.$$
    Putting the terms back together then yields
    $$\int_{\Omega_\text{free}} \mathbbm{1}_{\{\mathcal{O}_{\text{vis}}(z, P \cup \{ x \}) \geq i\}} dz = \int_{\Omega_\text{free}} (1 - \phi_x(z))  \mathbbm{1}_{\{\mathcal{O}_{\text{vis}}(z, P) \geq i\}} + \phi_x(z)  \mathbbm{1}_{\{\mathcal{O}_{\text{vis}}(z, P) \geq i-1\}} dz$$
    which is equal to
    $$\int_{\Omega_\text{free}} \mathbbm{1}_{\{\mathcal{O}_{\text{vis}}(z, P) \geq i\}} +  \int_{\Omega_\text{free}} \phi_x(z)  \left[ \mathbbm{1}_{\{\mathcal{O}_{\text{vis}}(z, P) \geq i-1\}} -\mathbbm{1}_{\{\mathcal{O}_{\text{vis}}(z, P) \geq i\}} \right] dz.$$
    This shows that
    $$\Delta f(x, P) = \sum_{i=1}^k w_i \int_{\Omega_\text{free}} \phi_x(z)  \left[ \mathbbm{1}_{\{\mathcal{O}_{\text{vis}}(z, P) \geq i-1\}} -\mathbbm{1}_{\{\mathcal{O}_{\text{vis}}(z, P) \geq i-1\}} \right] dz.$$
    Note that $\mathbbm{1}_{\{\mathcal{O}_{\text{vis}}(z, P) \geq 0\}} \equiv 1$. Pulling the sum inside the integral and transforming the summation parameter then yields
    $$\Delta f(x, P) = \int_{\Omega_\text{free}} \phi_x(z) \left[ w_1 -  w_k \mathbbm{1}_{\{\mathcal{O}_{\text{vis}}(z, P) \geq k\}} + \sum_{i=1}^{k-1} (w_{i+1} - w_i) \mathbbm{1}_{\{\mathcal{O}_{\text{vis}}(z, P) \geq i\}}\right] dz.$$
    Note that this statement is true for any countable subset $P$ and is therefore also true for $A$ and $B$ respectively. Using this fact we can show that $\Delta f_k(x, A) - \Delta f_k(x, B)$ can be written as the sum of
    $$\int_{\Omega_\text{free}} \phi_x(z) w_k \left(\mathbbm{1}_{\{\mathcal{O}_{\text{vis}}(z, B) \geq i\}} - \mathbbm{1}_{\{\mathcal{O}_{\text{vis}}(z, A) \geq i\}} \right) dz$$
    and 
    $$\int_{\Omega_\text{free}} \phi_x(z) \left[ \sum_{i=1}^{k-1} (w_{i+1} - w_i) \left(\mathbbm{1}_{\{\mathcal{O}_{\text{vis}}(z, A) \geq i\}} - \mathbbm{1}_{\{\mathcal{O}_{\text{vis}}(z, B) \geq i\}} \right) \right] dz.$$
    In the proof of monotonicity it was already shown that 
    $$\mathbbm{1}_{\{\mathcal{O}_{\text{vis}}(z, B) \geq i\}} \geq \mathbbm{1}_{\{\mathcal{O}_{\text{vis}}(z, A) \geq i\}}.$$
    Further note that $\phi_x(z) \geq 0$ for all $z \in \Omega_\text{free}$. Therefore $w_k \geq 0$ and $w_{i+1} - w_i \leq 0$ are sufficient conditions for both of the above terms to be positive. This shows that if for all $i \in \{1,...,k\}$ $0 \leq w_{i+1} \leq w_i$ then
    $$\Delta f_k(x, A) \geq \Delta f_k(x, B)$$
\end{proof}

\begin{thm}
    Let $P_n = \{x_1,...,x_n\}$ be the set of $n$ sensors placed according to algorithm~1 using $\Delta f_k$ as the gain function and parameter $\epsilon \in [0,1)$. Further assume $P_l^* = \{x_1^*,...,x_l^*\}$ is the solution of
    $$\argmax_{P \subseteq \Omega_\text{free}, \vert P \vert = l} f_k(P).$$
    Then
    $$f_k(P_n) \geq \left( 1 -  e^{-(1-\epsilon)\frac{n}{l}} \right) f_K(P_l^*)$$
\end{thm}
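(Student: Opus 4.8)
The plan is to run the classical Nemhauser--Wolsey--Fisher greedy analysis for monotone submodular functions, adapted to the $\epsilon$-approximate selection rule of Algorithm~\ref{alg:eps-greedy}. Write $P^\ast = P_l^\ast$ for the optimal size-$l$ set, $P_i=\{x_1,\dots,x_i\}$ for the greedy set after $i$ steps (so $P_0=\emptyset$ and $f_k(P_0)=0$), and set $\delta_i = f_k(P^\ast)-f_k(P_i)$.

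The main step is a one-step comparison inequality: for every $i$,
$$f_k(P^\ast)\ \le\ f_k(P_i) + l\cdot \max_{\mathbf z\in\Omega_\text{free}} \mathcal G_k(\mathbf z,P_i).$$
To prove it I would enumerate $P^\ast=\{x_1^\ast,\dots,x_l^\ast\}$ and telescope along the chain $P_i \subseteq P_i\cup\{x_1^\ast\}\subseteq\cdots\subseteq P_i\cup P^\ast$:
$$f_k(P_i\cup P^\ast)-f_k(P_i)\ =\ \sum_{j=1}^{l}\mathcal G_k\bigl(x_j^\ast,\,P_i\cup\{x_1^\ast,\dots,x_{j-1}^\ast\}\bigr).$$
By submodularity (Proposition~\ref{prop:submodular}) each term is at most $\mathcal G_k(x_j^\ast,P_i)$, since $P_i\subseteq P_i\cup\{x_1^\ast,\dots,x_{j-1}^\ast\}$, so the sum is at most $l\cdot\max_{\mathbf z}\mathcal G_k(\mathbf z,P_i)$; and by monotonicity (Proposition~\ref{prop:monotone}), $f_k(P^\ast)\le f_k(P_i\cup P^\ast)$. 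Combining the two gives the inequality. This is where both structural properties are used, and it is the only nontrivial part of the argument.

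Next I would plug in the greedy rule. By construction $\mathcal G_k(x_{i+1},P_i)\ge(1-\epsilon)\max_{\mathbf z}\mathcal G_k(\mathbf z,P_i)$, and by \eqref{eq:gain_in_fk} the left-hand side equals $f_k(P_{i+1})-f_k(P_i)=\delta_i-\delta_{i+1}$. Substituting $\max_{\mathbf z}\mathcal G_k(\mathbf z,P_i)\le\frac{1}{1-\epsilon}(\delta_i-\delta_{i+1})$ into the comparison inequality and rearranging gives the recursion
$$\delta_{i+1}\ \le\ \Bigl(1-\tfrac{1-\epsilon}{l}\Bigr)\delta_i .$$

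Finally I would iterate this from $i=0$ to $n$. As long as the $\delta_i$ stay nonnegative (if some $\delta_{i_0}<0$ then $f_k(P_n)\ge f_k(P_{i_0})>f_k(P^\ast)$ by monotonicity, so the claimed bound holds trivially), using $\delta_0=f_k(P^\ast)$ we get $\delta_n\le\bigl(1-\tfrac{1-\epsilon}{l}\bigr)^n f_k(P^\ast)$, and then the elementary estimate $1-x\le e^{-x}$ with $x=\tfrac{1-\epsilon}{l}\in[0,1]$ yields $\delta_n\le e^{-(1-\epsilon)n/l}f_k(P^\ast)$, i.e. $f_k(P_n)=f_k(P^\ast)-\delta_n\ge\bigl(1-e^{-(1-\epsilon)n/l}\bigr)f_k(P^\ast)$. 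No genuine obstacle is expected; the only things to be careful about are the telescoping decomposition above, propagating the $(1-\epsilon)$ slack correctly through the recursion, and the bookkeeping for the $n>l$ / negative-$\delta_i$ case (and noting that $f_K$ in the statement is just $f_k$).
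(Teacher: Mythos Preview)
Your proposal is correct and follows essentially the same argument as the paper: monotonicity plus a telescoping/submodularity bound give $f_k(P_l^\ast)-f_k(P_i)\le \frac{l}{1-\epsilon}\bigl(f_k(P_{i+1})-f_k(P_i)\bigr)$, which yields the geometric recursion $\delta_{i+1}\le(1-\tfrac{1-\epsilon}{l})\delta_i$ and then the claimed bound via $1-x\le e^{-x}$. Your extra remark on the negative-$\delta_i$ case and the $f_K$ typo are fine observations, but the core route is identical to the paper's.
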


\begin{proof}
    Let $m \in \mathbb{N}$ be arbitrary. 
    First since $f_k$ is monotone we observe that 
    $$f_k(P_l^*) \leq f_k(P_l^* \cup P_m).$$
    By definition of the discrete derivative
    $$\Delta f_k(x, P) = f_k(P \cup \{ x \}) - f_k(P)$$
    which yields
    $$f_k(P_l^* \cup P_m) = f_k(P_m) + \sum_{i=1}^l \Delta f_k(x_i^*, P_m \cup \{x_1^*,...,x_{i-1}^*\}).$$
    Since $f_k$ is submodular
    $$\Delta f_k(x_i^*, P_m \cup \{x_1^*,...,x_{i-1}^*\}) \leq \Delta f_k(x_i^*, P_m) \leq  \max_{x \in \Omega_{\text{free}}} \Delta f_k(x, P_m)$$
    for any $i \in \{1,...,l\}$. 
    Using the greedy algorithm we know that the sensor $x_{m+1}$ will be chosen such that the gain function satisfies
    $$\mathcal{G}_k(x_{m+1}, P_m) \geq (1 - \epsilon) \max_{x \in \Omega_{\text{free}}} \mathcal{G}_k(x, P_m).$$
    Since $\mathcal{G}_k = \Delta f_k$ this shows
    $$\Delta f_k(x_i^*, P_m) \leq \frac{1}{1-\epsilon} \Delta f_k(x_{m+1}, P_m) = \frac{1}{1-\epsilon} \left( f_k(P_{m+1}) - f_k(P_m)) \right).$$
    Collectively we have shown that
    $$f_k(P_l^*) - f_k(P_m) \leq \frac{l}{1-\epsilon} \left( f_k(P_{m+1}) - f_k(P_m)) \right).$$
    Defining
    $$\delta_m = f_k(P_l^*) - f_k(P_m), \delta_0 = f_k(P_l^*)$$\
    we see that
    $$\delta_m \leq \frac{l}{1 - \epsilon} (\delta_m - \delta_{m+1})$$
    or equivalently
    $$\delta_{m+1} \leq \left( 1 - \frac{1-\epsilon}{l} \right) \delta_m.$$
    Solving this recursion yields
    $$\delta_n \leq \left( 1 - \frac{1-\epsilon}{l} \right)^n \delta_0.$$
    Plugging in the definition of $\delta_n$ we see
    $$\left[ 1 - \left( 1 - \frac{1-\epsilon}{l} \right)^n \right] f_k(P_l^*) \leq f_k(P_n).$$
    Finally using the inequality $1 - x \leq e^{-x}$ shows
    $$\left[ 1 - e^{-(1-\epsilon) \frac{n}{l}} \right] f_k(P_l^*) \leq f_k(P_n).$$
\end{proof}

\section{Choice of hyper parameters for neural network}
As the network architecture we chose the TiraFL architecture as described in \cite{jégou2017layers}. Following the naming convention the table below shows the hyper parameters chosen for our application.\\

\begin{center}
\begin{tabular}{| l | l |}
    \hline
    in\_channels & 4 \\
    \hline
    down\_blocks & (4,5,7,7,12,12)\\
    \hline
    up\_blocks & (12,12,7,7,5,4)\\
    \hline
    bottleneck\_layers & 15\\
    \hline
    growth\_rate & 16\\
    \hline
    out\_chans\_first\_conv & 48\\
    \hline
    n\_classes & 4\\
    \hline
\end{tabular}
\end{center}

% \begin{itemize}
%     \item Both algorithm~\ref{alg:eps-greedy} and algorithm~\ref{alg:round_robin} showed promising results for solving the multi-coverage problem
%     \item Although not explicitly specified in the algorithm they produce sensor network which are spaced out and therefore robust against technical failure.
%     \item While the neural network approximation is capable of problem in most cases, it occasionally fails to achieve threshold visibility (line in figure~\ref{fig:scatter}).
%     \item This is most probability due to the data set and can likely be resolved by adding more data points or use less harsh termination criteria in the data set generation
%     \item Algorithm~\ref{alg:eps-greedy} and Algorithm~\ref{alg:round_robin} empirically produce spaced out sensor locations and therefore robust sensor networks.
%     \item Algorithm~\ref{alg:eps-greedy} is incentivized to produce spaced out sensors (see proposition~\ref{prop:submodular})
% \end{itemize}
% \comment{The paper ends with a conclusion.}

% \comment{Greedy (next-best-view) algorithm is convenient for robotic applications. The parallel greedy could provide a way to design coordinated path planning for a group of robots.}
% ---- Bibliography ----
%
% BibTeX users should specify bibliography style 'splncs04'.
% References will then be sorted and formatted in the correct style.
%
\bibliographystyle{splncs04}
\bibliography{main}

\begin{thebibliography}{10}
\providecommand{\url}[1]{\texttt{#1}}
\providecommand{\urlprefix}{URL }
\providecommand{\doi}[1]{https://doi.org/#1}

\bibitem{greedy2}
Bircher, A., Kamel, M.S., Alexis, K., Oleynikova, H., Siegwart, R.: Receding horizon "next-best-view" planner for 3d exploration. In: IEEE ROBOTICS AND AUTOMATION LETTERS, VOL. 7, NO. 2. pp. 1462--1468 (05 2016). \doi{10.1109/ICRA.2016.7487281}

\bibitem{greedy3}
Bircher, A., Kamel, M.S., Alexis, K., Oleynikova, H., Siegwart, R.: Receding horizon path planning for 3d exploration and surface inspection. Autonomous Robots  \textbf{42} (02 2018). \doi{10.1007/s10514-016-9610-0}

\bibitem{BjorlingSachs1995AnEA}
Bjorling-Sachs, I., Souvaine, D.L.: An efficient algorithm for guard placement in polygons with holes. Discrete \& Computational Geometry  \textbf{13},  77--109 (1995), \url{https://api.semanticscholar.org/CorpusID:37188717}

\bibitem{Chen2021}
Chen, M., Herbert, S.L., Hu, H., Pu, Y., Fisac, J.F., Bansal, S., Han, S., Tomlin, C.J.: {FaSTrack}:a modular framework for real-time motion planning and guaranteed safe tracking. {IEEE} Transactions on Automatic Control  \textbf{66}(12),  5861--5876 (12 2021). \doi{10.1109/tac.2021.3059838}, \url{https://doi.org/10.1109\%2Ftac.2021.3059838}

\bibitem{cheng2005visibility}
Cheng, L.T., Tsai, Y.H.: {Visibility Optimization Using Variational Approaches}. Communications in Mathematical Sciences  \textbf{3}(3),  425 -- 451 (2005)

\bibitem{StarMapExample}
Ghosh, S.K., Burdick, J.W., Bhattacharya, A., Sarkar, S.: Online algorithms with discrete visibility - exploring unknown polygonal environments. IEEE Robotics \& Automation Magazine  \textbf{15}(2),  67--76 (2008). \doi{10.1109/MRA.2008.921542}

\bibitem{greedy1}
González-Baños, H., Latombe, J.C.: Navigation strategies for exploring indoor environments. I. J. Robotic Res.  \textbf{21},  829--848 (10 2002). \doi{10.1177/027836402128964099}

\bibitem{DepthRefl}
Goodrich, B., Kuefler, A., Richards, W.D.: Depth by poking: Learning to estimate depth from self-supervised grasping. In: 2020 IEEE International Conference on Robotics and Automation (ICRA). pp. 10466--10472 (2020). \doi{10.1109/ICRA40945.2020.9196797}

\bibitem{GoroMinProb}
Goroshin, R., Huynh, Q., Zhou, H.: Approximate solutions to several visibility optimization problems. Communications in Mathematical Sciences  \textbf{9} (06 2011). \doi{10.4310/CMS.2011.v9.n2.a9}

\bibitem{greedy4}
Heng, L., Gotovos, A., Krause, A., Pollefeys, M.: Efficient visual exploration and coverage with a micro aerial vehicle in unknown environments. Proceedings - IEEE International Conference on Robotics and Automation  \textbf{2015},  1071--1078 (06 2015). \doi{10.1109/ICRA.2015.7139309}

\bibitem{185346}
Hoffmann, F., Kaufmann, M., Kriegel, K.: The art gallery theorem for polygons with holes. In: [1991] Proceedings 32nd Annual Symposium of Foundations of Computer Science. pp. 39--48 (1991). \doi{10.1109/SFCS.1991.185346}

\bibitem{jin2003estimation}
Jin, H., Yezzi, A.J., Tsai, Y.H., Cheng, L.T., Soatto, S.: Estimation of 3d surface shape and smooth radiance from 2d images: A level set approach. Journal of Scientific Computing  \textbf{19},  267--292 (2003)

\bibitem{jégou2017layers}
Jégou, S., Drozdzal, M., Vazquez, D., Romero, A., Bengio, Y.: The one hundred layers tiramisu: Fully convolutional densenets for semantic segmentation (2017)

\bibitem{visalg}
Kao, C.Y., Tsai, R.: Properties of a level set algorithm for the visibility problems. J. Sci. Comput.  \textbf{35},  170--191 (06 2008). \doi{10.1007/s10915-008-9197-5}

\bibitem{kim2016optimal}
Kim, S.J., Kang, S.H., Zhou, H.: Optimal sensor positioning (osp); a probability perspective study (2016)

\bibitem{krause}
Krause, A., Golovin, D.: Submodular function maximization. Tractability  \textbf{3},  71--104 (01 2011). \doi{10.1017/CBO9781139177801.004}

\bibitem{landa2007robotic}
Landa, Y., Galkowski, D., Huang, Y.R., Joshi, A., Lee, C., Leung, K.K., Malla, G., Treanor, J., Voroninski, V., Bertozzi, A.L., et~al.: Robotic path planning and visibility with limited sensor data. In: American Control Conference, 2007. ACC'07. pp. 5425--5430. IEEE (2007)

\bibitem{landa2008visibility}
Landa, Y., Tsai, R.: Visibility of point clouds and exploratory path planning in unknown environments. Communications in Mathematical Sciences  \textbf{6}(4),  881--913 (2008)

\bibitem{landa2006visibility}
Landa, Y., Tsai, R., Cheng, L.T.: Visibility of point clouds and mapping of unknown environments. In: Advanced Concepts for Intelligent Vision Systems: 8th International Conference, ACIVS 2006, Antwerp, Belgium, September 18-21, 2006. Proceedings 8. pp. 1014--1025. Springer Berlin Heidelberg (2006)

\bibitem{1057165}
Lee, D., Lin, A.: Computational complexity of art gallery problems. IEEE Transactions on Information Theory  \textbf{32}(2),  276--282 (1986). \doi{10.1109/TIT.1986.1057165}

\bibitem{submod}
Ly, L., Tsai, Y.R.: Autonomous exploration, reconstruction, and surveillance of 3d environments aided by deep learning. CoRR  \textbf{abs/1809.06025} (2018), \url{http://arxiv.org/abs/1809.06025}

\bibitem{ly2018greedy}
Ly, L., Tsai, Y.H.R.: Greedy algorithms for sparse sensor placement via deep learning. arXiv preprint arXiv:1809.06025  (2018)

\bibitem{explorationsurveilance}
Ly, L., Tsai, Y.H.R.: Autonomous exploration, reconstruction, and surveillance of 3d environments aided by deep learning. In: 2019 International Conference on Robotics and Automation (ICRA). pp. 5467--5473 (2019). \doi{10.1109/ICRA.2019.8794426}

\bibitem{MnihThesis}
Mnih, V.: Machine Learning for Aerial Image Labeling. Ph.D. thesis, University of Toronto (2013)

\bibitem{1056648}
O'Rourke, J., Supowit, K.: Some np-hard polygon decomposition problems. IEEE Transactions on Information Theory  \textbf{29}(2),  181--190 (1983). \doi{10.1109/TIT.1983.1056648}

\bibitem{uncertaintypaper}
Popović, M., Thomas, F., Papatheodorou, S., Funk, N., Vidal-Calleja, T., Leutenegger, S.: Volumetric occupancy mapping with probabilistic depth completion for robotic navigation. IEEE Robotics and Automation Letters  \textbf{6}(3),  5072--5079 (2021). \doi{10.1109/LRA.2021.3070308}

\bibitem{ronneberger2015unet}
Ronneberger, O., Fischer, P., Brox, T.: U-net: Convolutional networks for biomedical image segmentation (2015)

\bibitem{taus2023efficient}
Taus, L., Tsai, Y.H.R.: Efficient and robust sensor placement in complex environments (2023)

\bibitem{tsai2004visibility}
Tsai, Y.H., Cheng, L.T., Osher, S., Burchard, P., Sapiro, G.: Visibility and its dynamics in a pde based implicit framework. Journal of Computational Physics  \textbf{199}(1),  260--290 (2004)

\bibitem{URRUTIA2000973}
Urrutia, J.: Chapter 22 - art gallery and illumination problems. In: Sack, J.R., Urrutia, J. (eds.) Handbook of Computational Geometry, pp. 973--1027. North-Holland, Amsterdam (2000). \doi{https://doi.org/10.1016/B978-044482537-7/50023-1}, \url{https://www.sciencedirect.com/science/article/pii/B9780444825377500231}

\bibitem{5GSurvey}
Uwaechia, A.N., Mahyuddin, N.M.: A comprehensive survey on millimeter wave communications for fifth-generation wireless networks: Feasibility and challenges. IEEE Access  \textbf{8},  62367--62414 (2020). \doi{10.1109/ACCESS.2020.2984204}

\bibitem{valente2012information}
Valente, L., Tsai, Y.H.R., Soatto, S.: Information gathering control via exploratory path planning. In: 2012 46th Annual Conference on Information Sciences and Systems (CISS). pp.~1--6. IEEE (2012)

\bibitem{valente2014information}
Valente, L., Tsai, Y.H.R., Soatto, S.: Information-seeking control under visibility-based uncertainty. Journal of Mathematical Imaging and Vision  \textbf{48}(2),  339--358 (2014)

\end{thebibliography}
\nocite{*}

\end{document}